%%%%%%%%%%%%%%%%%%%%%%%%%%%%%%%%%%%%%%%%%%%%%%%%%%%%%%%%%%%%%%%%%%%%%%%%%%%%%%%%
%2345678901234567890123456789012345678901234567890123456789012345678901234567890
%        1         2         3         4         5         6         7         8

\documentclass[letterpaper, 10 pt, conference]{ieeeconf}  % Comment this line out if you need a4paper

\IEEEoverridecommandlockouts                              % This command is only needed if 
                                                          % you want to use the \thanks command

\overrideIEEEmargins                                      % Needed to meet printer requirements.

%In case you encounter the following error:
%Error 1010 The PDF file may be corrupt (unable to open PDF file) OR
%Error 1000 An error occurred while parsing acontents stream. Unable to analyze the PDF file.
%This is a known problem with pdfLaTeX conversion filter. The file cannot be opened with acrobat reader
%Please use one of the alternatives below to circumvent this error by uncommenting one or the other
%\pdfobjcompresslevel=0
%\pdfminorversion=4

% See the \addtolength command later in the file to balance the column lengths
% on the last page of the document

% The following packages can be found on http:\\www.ctan.org
%\usepackage{graphics} % for pdf, bitmapped graphics files
%\usepackage{epsfig} % for postscript graphics files
%\usepackage{mathptmx} % assumes new font selection scheme installed
%\usepackage{times} % assumes new font selection scheme installed
%\usepackage{amsmath} % assumes amsmath package installed
%\usepackage{amssymb}  % assumes amsmath package installed

\usepackage[T1]{fontenc}
\usepackage{preamble}

\title{\LARGE \bf
Collaborative Graph Exploration with Reduced Pose-SLAM Uncertainty via Submodular Optimization
}

\author{
Ruofei Bai$^{1, 2}$, Shenghai Yuan$^{1}$, Hongliang Guo$^{2}$, Pengyu Yin$^{1}$, Wei-Yun Yau$^{2}$ and Lihua Xie$^{1}$, \emph{Fellow}, \emph{IEEE}% <-this % stops a space
% \thanks{*This work was not supported by any organization}% <-this % stops a space
\thanks{$^{1}$School of
Electrical and Electronic Engineering, Nanyang Technological University, Singapore 639798
        {\tt\small \{ruofei001, shyuan, pengyu001, elhxie\}@ntu.edu.sg}}%
\thanks{$^{2}$Institute for
Infocomm Research (I2R), Agency for Science, Technology and Research
(A*STAR), Singapore 138632
        {\tt\small \{stubair, guo\_hongliang, wyyau\}@i2r.a-star.edu.sg}}%
}

\begin{document}

\maketitle
\thispagestyle{empty}
\pagestyle{empty}
%\thispagestyle{plain}
%\pagestyle{plain}

%%%%%%%%%%%%%%%%%%%%%%%%%%%%%%%%%%%%%%%%%%%%%%%%%%%%%%%%%%%%%%%%%%%%%%%%%%%%%%%%
\begin{abstract}
This paper considers the collaborative graph exploration problem in GPS-denied environments, where a group of robots are required to cover a graph environment while maintaining reliable pose estimations in collaborative simultaneous localization and mapping (SLAM).
Considering both objectives presents challenges for multi-robot pathfinding, as it involves the expensive covariance inference for SLAM uncertainty evaluation, especially considering various combinations of robots' paths.
To reduce the computational complexity, we propose an efficient two-stage strategy where exploration paths are first generated for quick coverage, 
and then enhanced by adding informative and distance-efficient loop-closing actions, called loop edges, along the paths for reliable pose estimation.
We formulate the latter problem as a non-monotone submodular maximization problem by relating SLAM uncertainty with pose graph topology, which (1) facilitates more efficient evaluation of SLAM uncertainty than covariance inference, and (2) allows the application of approximation algorithms in submodular optimization to provide optimality guarantees.
We further introduce the ordering heuristics to improve objective values while preserving the optimality bound. 
Simulation experiments over randomly generated graph environments verify the efficiency of our methods in finding paths for quick coverage and enhanced pose graph reliability, and benchmark the performance of the approximation algorithms and the greedy-based algorithm in the loop edge selection problem. 
Our implementations will be open-source at \url{https://github.com/bairuofei/CGE}. %\href{github.com/ntu-CARTIN/CGE}. 

\end{abstract}

%%%%%%%%%%%%%%%%%%%%%%%%%%%%%%%%%%%%%%%%%%%%%%%%%%%%%%%%%%%%%%%%%%%%%%%%%%%%%%%%
\section{Introduction}

Multi-robot exploration holds immense promise in applications such as search and rescue~\cite{tian2020search}, surveillance and inspection~\cite{ishat-e-rabban_failure-resilient_2021}, autonomous mapping~\cite{tian_kimera-multi_2022}, etc. 
However, challenges arise in GPS-denied environments like indoor, urban valleys, and tunnels, where robots need to estimate their pose with onboard sensors during exploration.
Leveraging Multi-Robot SLAM~\cite{toward_c_slam_Lajoie_2021, huang_disco-slam_2022} for Exploration (MRSE) allows for enhanced spatial awareness, with superior efficiency and robustness than using a single robot.
Specifically, in pose graph-based multi-robot SLAM methods~\cite{grisetti_tutorial_2010}, robots incrementally construct individual pose graphs during exploration, which are connected by inter-robot loop closures.
By aggregating pose graphs from robots in a central server, collaborative pose estimation can be achieved by multi-robot pose graph optimization, mitigating single-robot odometry drift and ensuring consistent pose estimation among robots.

\begin{figure}[ht]
\centering
\includegraphics[width=\linewidth]{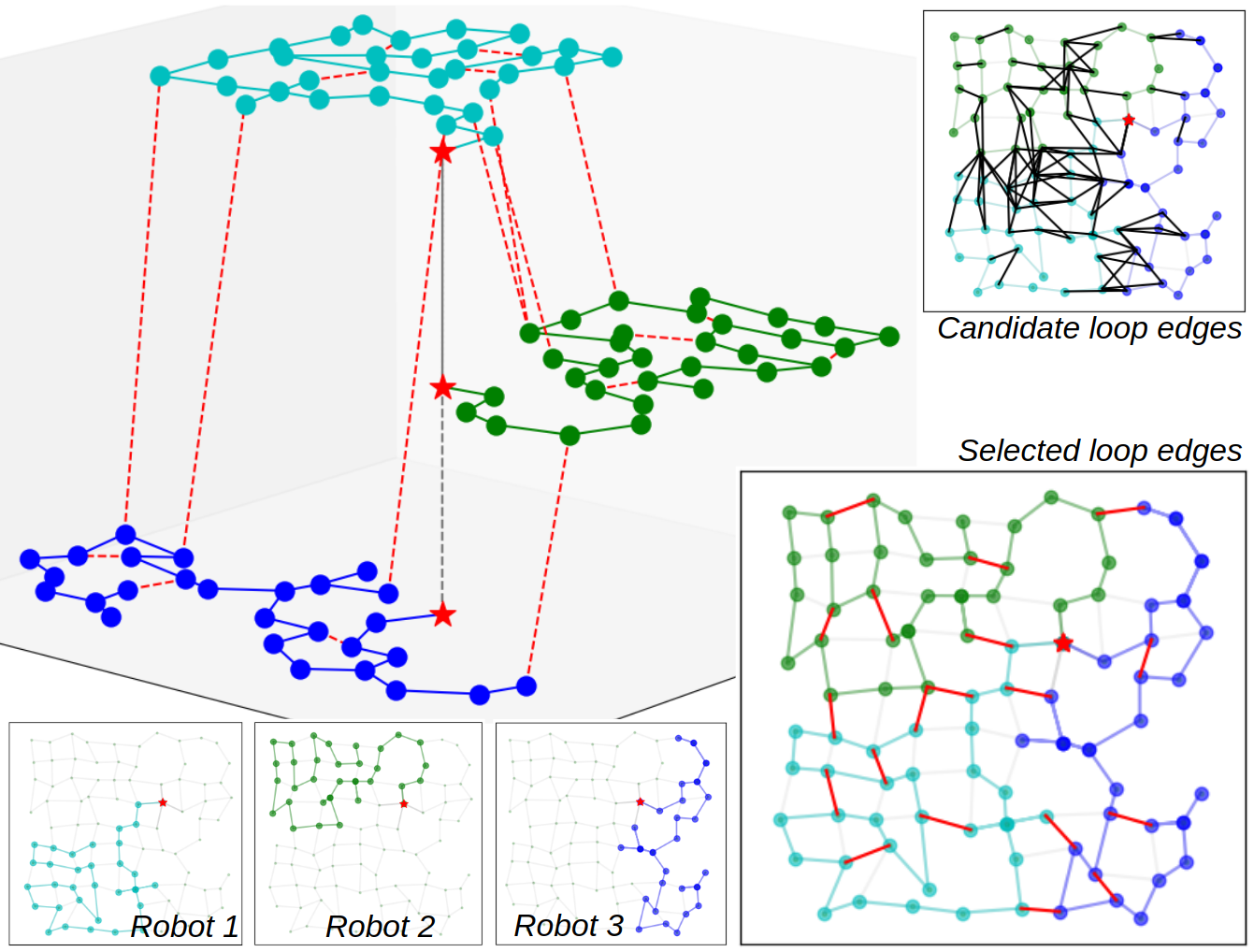}
\vspace{-15pt}
\caption{The graph exploration with three robots in a $100m\times 100m$ 2D graph environment (light gray). The robots' exploration paths are colored in cyan, green, and blue, respectively, with the starting positions marked with red stars. The selected loop edges are colored in red, and the valid candidate loop edges are colored in black.
% loop closures established by VRP paths are colored in gray. 
The resulting exploration paths efficiently cover the whole graph, while forming a well-connected multi-robot pose graph topology to reduce SLAM uncertainty with informative and distance-efficient loop edges.}
 \label{fig_multilayer}
 \vspace{-10pt}
\end{figure}

However, most existing works typically treat multi-robot SLAM and exploration as distinct problems~\cite{zhang_mr-topomap_2022,kitanov_topological_2023, Yang2024Intent}, and fail to find paths that can form well-connected pose graphs, which are implicitly required by SLAM methods for reliable pose estimation~\cite{multi-tare-2023-RAL}.
Ignoring the pose graph reliability in MRSE problems may cause inconsistent pose estimations among robots due to weak connections in their individual pose graphs~\cite{DCL-sensors-2024}. 
One feasible solution is to manually define robots' trajectories to facilitate inter-robot loop closures for enhanced pose graph reliability, which, however, limits the autonomy and efficiency of robots.

The challenges of MRSE lie in the high computational complexity in the planning stage to consider both exploration efficiency and pose graph reliability; 
and the lack of 
efficient metrics to evaluate the reliability of the resulting multi-robot pose graph from various combinations of robots' candidate paths, which typically requires expensive covariance inference~\cite{vallve_active_2015} and making it intractable for efficient planning.

In this paper, we address the above challenges by formulating them into a collaborative graph exploration problem, as shown in Fig.~\ref{fig_multilayer}, where robots need to find paths to cover a graph while maintaining a well-connected multi-robot pose graph for reliable pose estimation.
To reduce the planning complexity, the pathfinding problem is divided into two stages: exploration paths are first generated for quick coverage; and then enhanced by identifying informative loop-closing actions, called \emph{loop edges}, along the obtained paths, which are inserted into the paths to improve the reliability of the resulting pose graph.
We formulate the latter problem as a distance-aware loop edge selection problem with a submodular objective function, where the pose graph reliability is approximately evaluated with a computationally more efficient graph topology metric, avoiding expensive covariance inference.
Moreover, by preserving the submodularity property of the problem, we can provide optimality guarantees to the selected loop edges using approximation algorithms in submodular optimization, with the added improvement of ordering heuristics to enhance their performance while maintaining optimality.
The contributions of this paper are summarized as follows:

\begin{itemize}
    \item  An efficient two-stage strategy to the multi-robot graph exploration problem, which prioritizes both exploration efficiency and the reliability of collaborative pose estimation in multi-robot SLAM. 
    %A two-stage solution to multi-robot graph exploration problem considering both exploration efficiency and collaborative pose graph reliability, making full use of existing VRP solvers; 
    \item A novel formulation of the informative and distance-efficient loop edge selection problem as a non-monotone submodular maximization problem, alongside the application of existing approximation algorithms with optimality guarantees.%A novel formulation of the energy-aware loop edge selection problem to a non-monotone submodular maximization problem, with existing approximation algorithms that provide optimality guarantees. 
    \item We conduct simulation experiments to confirm positive correlations between pose graph topology and SLAM uncertainty, demonstrate efficient pathfinding and reliable pose graph formation with proposed methods, and benchmark the performance and time complexity of approximation algorithms and the greedy-based algorithm.
\end{itemize}

\section{Related Works}

\subsection{Multi-Robot Active Exploration}

Multi-robot active exploration aims to find an action sequence for a group of robots to actively explore the environment, considering several factors like energy consumption, target uncertainty reduction, etc.
Existing methods in multi-robot active exploration can be generally classified into two types: search-based~\cite{atanasov_decentralized_2015} and sampling-based~\cite{kantaros_sampling-based_2021}. 
They all aim to efficiently search an action space that increases exponentially w.r.t. the number of robots and the planning horizon. 
Previous work~\cite{atanasov_decentralized_2015}
considers multi-robot active exploration to reduce the uncertainty of targets distribution in the environment, and applies the coordinate descent algorithm to provide $\frac{1}{2}$ optimality guarantee for the final selected paths.
Cai~\etal~further considers energy-aware information gathering problem for multiple robots~\cite{cai_non-monotone_2021}, where the robots' trajectories are selected by optimizing a non-monotone submodular function subject to partition matroid constraints.
However, the trajectories of robots are evaluated separately, hence cooperative sensing among robots is not considered.

The aforementioned works require covariance inference to evaluate the effect of future actions, which is computationally expensive. 
To alleviate this issue, recent works have used topology metrics to approximate the covariance inference in active SLAM~\cite{chen_broadcast_2020, placed_fast_2021}, and belief space planning~\cite{kitanov_topological_2018, kitanov_topological_2023}, which brings significantly lower computational complexity.
Specifically, Khosoussi \emph{et al.} show that the Fisher Information Matrix (FIM) in the pose graph optimization problem is related to the pose graph Laplacian matrix weighted by the corresponding covariance matrices~\cite{khosoussi_reliable_2019}. 
Chen \emph{et al.} further extend this work into 3D case~\cite{chen_cramerrao_2021}.
Additionally, 
Placed \emph{et al.} directly encapsulate the covariance matrix of each measurement as the edge weight of the Laplacian matrix, providing an efficient approximation of the FIM~\cite{placed_general_2023}.

This paper also uses the topology metrics for pose-SLAM uncertainty evaluation like in~\cite{khosoussi_reliable_2019, kitanov_topological_2018}, but further considers balancing the distance cost in a graph exploration task, which results in a non-monotone objective function.
Moreover, by utilizing a graph representation of the environment, this paper finds multiple loop edges in robots pathfinding to form a globally reliable pose graph, while related works~\cite{chen_broadcast_2020, placed_fast_2021} adopt reactive strategies that only need to find the single best loop edge within a local horizon.
This paper extends our previous work~\cite{10577228} about single-robot SLAM-aware path planning to the multi-robot case. 
Previous work provides no performance guarantees by using a simple greedy algorithm with pruning techniques for loop edge selection.
Instead, by reformulating the objective function in this paper, approximation algorithms for submodular maximization can be used to find solutions with optimality guarantees.

% \vspace{-1pt}
\section{Preliminaries}

\subsection{Submodular and Monotone Set Function}

\vspace{-2pt}
Let $f:2^{\mathcal{N}} \xrightarrow{} \mathbb{R}$ be a set function defined on a ground set $\mathcal{N}$ consisting of finite elements.
For a set $A\subseteq \mathcal{N}$ and $u\in \mathcal{N}$, we define $\Delta_{f}(u\vert A) = f(A\cup \{u\})- f(A)$ for a given set function $f(\cdot)$ as its marginal profit when adding an element $u$ to an existing set $A$. 

\begin{definition}[Submodularity]
A set function $f: 2^{\mathcal{N}} \xrightarrow{} \mathbb{R}$ is submodular if
$\Delta_{f}(u\vert A) \ge \Delta_{f}(u\vert B), \forall A\subseteq B \subseteq \mathcal{N}, u\in \mathcal{N}\backslash B.$
\end{definition}

\begin{definition}[Monotonicity]
A set function $f: 2^{\mathcal{N}} \xrightarrow{} \mathbb{R}$ is monotone if for any $A \subseteq B \subseteq \mathcal{N}$, $f(A) \le f(B)$.
\end{definition}

\subsection{Graph Laplacian Matrix}

\vspace{-2pt}
Given a connected graph $\mathcal{G}$ with $n+1$ vertices and $m$ edges, the graph Laplacian matrix is defined as:
\begin{equation}
\small
    \Lp^{\circ} = \mathbf{B}^{\circ}\mathbf{B}^{\circ\top} = \sum\nolimits_{k = 1}^{m} \mathbf{B}_k \mathbf{B}_k^{\top}\in \mathbb{R}^{(n+1)\times (n+1)},
\end{equation}
where $\mathbf{B}^{\circ}\in \mathbb{R}^{(n+1) \times m}$ is the incidence matrix, $\mathbf{B}_k$ is the $k$-th column vector of $\mathbf{B}^{\circ}$ that only has non-zero values at two indices corresponding to the vertices connected by the $k$-th edge.
% Each row of $\mathbf{B}^{\circ}$ corresponds to a vertex, and each column corresponds to an edge in $\mathcal{G}$.
The weighted Laplacian matrix of the graph $\mathcal{G}$ is defined as $    \Lp^{\circ}_{\gamma} = \mathbf{B}^{\circ}\operatorname{Diag\{\gamma_1, ..., \gamma_m\}}\mathbf{B}^{\circ\top}$, 
where $\gamma_k$ is the weight of $k$-th edge in $\mathcal{G}$.
If one vertex is anchored in $\mathcal{G}$, \ie, the corresponding row is removed from the incidence matrix $\mathbf{B}^{\circ}$, the obtained Laplacian matrix is called the reduced Laplacian matrix, denoted as $\Lp \in \mathbb{R}^{n \times n}$.
The reduced weighted Laplacian matrix $\Lp_{\gamma}$ can be defined similarly.

\subsection{Collaborative Multi-Robot Pose Graph Optimization}

In a multi-robot pose-SLAM system with a set of robots $\mathcal{R} = \{1, 2, ..., R\}$, each robot $r \in \mathcal{R}$ incrementally constructs an individual pose graph $\gpose_{r} = \langle \mathcal{X}_{r}, \mathcal{Z}_{r}  \rangle$ for pose estimation, where $\mathcal{X}_{r}$ is a set of poses that correspond to the selected keyframes from sensor readings, \eg, Lidar scans or camera frames, and $\forall x_i \in \mathcal{X}_{r}$, $x_i \in \operatorname{SE}(2)$ or $\operatorname{SE}(3)$; $\mathcal{Z}_{r} = \{\langle x_i, x_j \rangle \vert x_i, x_j \in \mathcal{X}_{r}\}$ contains relative observations between pairs of poses in $\gpose_{r}$, established by either odometry estimation or loop closure detection~\cite{grisetti_tutorial_2010}.
By sharing individual pose graphs with a central server, a collaborative multi-robot pose graph $\gpose_{\mathcal{R}} = \langle \mathcal{X}, \mathcal{Z} \rangle$ can be constructed by connecting all individual pose graphs with inter-robot loop closures, where $\mathcal{X} = \cup_{r\in\mathcal{R}} \mathcal{X}_{r}$ and $\mathcal{Z} = (\cup_{r\in\mathcal{R}} \mathcal{Z}_{r}) \cup \zinter$.
The set $\zinter$ includes all inter-robot observations among robots. 
Each $z_{ij}\in \mathcal{Z}$ is assumed to follow a Gaussian distribution $\mathcal{N}(\hat{z}_{ij}(x_i, x_j), \Sigma_{ij})$, where $\hat{z}_{ij}(x_i, x_j)$ represents the expected transformation between $x_i$ and $x_j$, $\Sigma_{ij}$ is the covariance matrix for the actual observation $z_{ij}$.
Given all relative observations $\mathcal{Z}$, the multi-robot pose graph optimization aims to find the best estimation of robots' poses $\mathcal{X}$, which is equivalent to the following nonlinear least square problem~\cite{toward_c_slam_Lajoie_2021}:
\begin{equation}
    \min_{\mathcal{X}} \mathbf{F}(\mathcal{X}) =  \sum\nolimits_{z_{ij} \in \mathcal{Z}} e_{ij}^{\top}\mathbf{\Sigma}^{-1}_{ij}e_{ij},
\label{eq_cslam_obj}
\vspace{-3pt}
\end{equation}
where $e_{ij}=z_{ij} - \hat{z}_{ij}(x_i, x_j)$.
A local optimal estimation of $\mathcal{X}$ can be found using iterative local search methods, like the Gauss-Newton method or Levenberg–Marquardt algorithm.

% in which residual terms need to be linearized in each iteration.
The Hessian matrix $\mathbf{H}$ of the pose graph optimization problem is derived as:
\begin{equation}
\small
\begin{aligned}
    \mathbf{H} 
&= \frac{1}{2} \sum\nolimits_{z_{ij} \in \mathcal{Z}} \mathbf{J}_{ij} ^{\top} \boldsymbol{\Sigma}^{-1}_{ij} \mathbf{J}_{ij}
= \frac{1}{2} \sum\nolimits_{z_{ij} \in \mathcal{Z}} \mathbf{B}_{ij}\mathbf{B}_{ij}^{\top} \otimes \widetilde{\boldsymbol{\Sigma}}_{ij}^{-1},
\end{aligned}
\label{eq_hessian}
\end{equation}
where $\mathbf{J}_{ij}$ is the Jacobi matrix of the residual term $e_{ij}$ w.r.t. a vector of all variables in $\mathcal{X}$, $\mathbf{B}_{ij}\mathbf{B}_{ij}^{\top}$ is the Laplacian factor corresponding to the edge $z_{ij}$ in $\gpose_{\mathcal{R}}$; $\widetilde{\boldsymbol{\Sigma}}_{ij}$ is derived from the transformation of $\Sigma_{ij}$.
The second equality in Eq.~(\ref{eq_hessian}) holds because $\mathbf{J}_{k}$ only has non-zero values at indices $i$ and $j$, sharing a similar structure as the column vector $\mathbf{B}_{ij}$.
The detailed derivation of Eq.~(\ref{eq_hessian}) is referred to~\cite{placed_general_2023}.

% Note that the second equality in Eq.~(\ref{eq_hessian}) establishes the relationship between the Hessian $\mathbf{H}$ and graph Laplacian.

\subsection{Relating Pose Graph Uncertainty with Graph Topology}
\label{sec_FIM_laplacian}

In practice, the Hessian matrix $\mathbf{H}$ is also known as the \emph{observed} FIM~\cite{khosoussi_novel_2014}, denoted as $\mathbb{I}(\boldsymbol{x})$.
It provides a lower bound approximation of the covariance matrix in pose graph optimization according to the Cramér–Rao bound.
Scalar functions for $\mathbf{H}$ or $\mathbb{I}(\boldsymbol{x})$ can therefore be used to quantify the pose-SLAM uncertainty, among which the D-optimal is shown to be superior in capturing the global uncertainty in all dimensions of the poses~\cite{carrillo_comparison_2012}.
The D-optimal is a scalar function from Theory of Optimal Experimental Design~\cite{pazman1986foundations} and is defined as $\dopt{\Lp} = \det(\Lp)^{\frac{1}{n}}$ for an $n\times n$ matrix.

According to~\cite{placed_general_2023}, the D-optimal of the FIM $\mathbb{I}(\boldsymbol{x})$ in pose graph optimization can be well-approximated by the D-optimal of the reduced Laplacian matrix of the pose graph with proper edge weights:
\begin{equation}
\small
\begin{aligned}
&\dopt{\mathbb{I}(\boldsymbol{x})} 
=
\dopt{\sum\nolimits_{z_{ij} \in \mathcal{Z}} \mathbf{B}_{ij} \mathbf{B}_{ij}^{\top} \otimes \widetilde{\boldsymbol{\Sigma}}_{ij}^{-1}} \\
\approx &
\dopt{\sum\nolimits_{z_{ij} \in \mathcal{Z}} \mathbf{B}_{ij} \mathbf{B}_{ij}^{\top} \cdot \dopt{\widetilde{\boldsymbol{\Sigma}}_{ij}^{-1}} } 
% &=
% \dopt{\sum_{k=1}^{m} \mathbf{B}_{ij} \mathbf{B}_{ij}^{\top} \cdot \gamma_k} \\
= 
\dopt{\Lp_{\gamma}}.
\end{aligned}
\label{eq_relationship}
\end{equation}
The subscript $\gamma$ in $\Lp_{\gamma}$ indicates that each edge $z_{ij}$ in the pose graph is weighted by 
$\small D\text{-}opt(\raisebox{-0.6ex}{$\widetilde{\boldsymbol{\Sigma}}_{ij}^{-1}$})$. 
% the reduced Laplacian matrix is weighted by the D-optimal of the inverse of the covariance matrix for each edge in the pose graph.
Note that Eq.~(\ref{eq_relationship}) establishes the relationship between the pose graph reliability and graph topology, which is used in this paper for pose graph uncertainty evaluation.
It brings two main advantages: 
(1) adding loop closures to a pose graph can be directly reflected in the graph Laplacian matrix without multi-robot covariance inference, which is convenient when evaluating the informativeness of candidate loop-closing actions;
(2) the dimension of the weighted graph Laplacian only depends on the number of poses in the pose graph, regardless of the pose dimension, which is computationally more efficient.

\section{Problem Statement}

This paper considers the problem that a set of robots $\mathcal{R} = \{1, ..., R\}$ are required to explore an environment represented as a graph $\G = \langle \mathcal{V}, \mathcal{E}, \omega \rangle$, as shown in Fig.~\ref{fig_multilayer}, where $\mathcal{V}\subseteq \mathbb{R}^{2} / \mathbb{R}^{3}$ is a set of places of interests distributed in a 2D (or 3D) environment; $\mathcal{E}\subseteq \mathcal{V}\times \mathcal{V}$ includes pairs of vertices that are directly connected; $\omega: \mathcal{E} \xrightarrow{} \mathbb{R}_{\ge 0}$ is the distance function for the edges. 
The graph $\G$ is a discretized representation of an environment, which can be obtained from a prior topological map~\cite{oswald_speeding-up_2016}, an existing roadmap, or derived from the Voronoi partition of the environment~\cite{kim2020voronoi}. 
The robots are initially distributed over the vertices of $\mathcal{G}$, denoted as the set $\{v_{r}^{0}\}_{r\in \mathcal{R}}$.
The problem is to find a set of paths $\{\mathcal{P}_{r}\}_{r\in \mathcal{R}}$ starting from $\{v_{r}^{0}\}_{r\in \mathcal{R}}$, following which the robots in $\mathcal{R}$ can (1) quickly cover all vertices in $\mathcal{G}$; (2) maintain reliable multi-robot pose graph topology to reduce SLAM uncertainty.

\subsection{The Two-Stage Strategy}

The above problem directly relates to the Vehicle Routing Problem (VRP), \ie, a generalization of the Traveling Salesman Problem to multiple robots, and is generally NP-Hard~\cite{book_VRP_2002}.
It requires finding paths for several robots to visit specified locations once in the environment starting and ending at a depot, while minimizing the maximum single robot's distance.
To reduce the planning complexity and facilitate existing VRP solvers, we divide the original problem into two stages, as shown in Fig.~\ref{fig_framework}.
\begin{itemize}
    \item Stage 1: VRP pathfinding. The first stage solves a standard VRP problem over the graph $\G$ to find the shortest paths for robots for quick graph coverage. Existing VRP solvers like OR-Tools~\cite{ortools} can be used to provide sub-optimal solutions to the problem.
    \item Stage 2: Loop edge selection. Given the VRP paths, Stage 2 opportunistically finds informative and distance-efficient loop-closing actions to be inserted into the paths to form a reliable multi-robot pose graph topology.
\end{itemize}

The two-stage strategy sequentially considers both exploration efficiency and pose graph reliability, avoiding intractable search that considers all possible combinations of robots' paths.
Specifically, in Stage~$1$, the VRP problem is defined over $\G$. 
The starting vertices of robots are defined as their initial located vertices $\{v_{r}^{0}\}_{r\in \mathcal{R}}$, while the ending vertices are not specified to encourage quick coverage.
The obtained VRP path for robot $r\in \mathcal{R}$ is denoted as $\pvrp_{r}$, which is a list of vertices in $\mathcal{G}$. 
The robots can follow the VRP paths $\{\pvrp_{r}\}_{r\in \mathcal{R}}$ to achieve quick coverage of $\G$.
Then we focus on finding informative loop edges to enhance the VRP paths in Stage~$2$, which will be introduced in the following subsections.

\vspace{-5pt}
\subsection{Simulating Abstracted Pose Graph}
\label{sec_simulate_pose_graph}

\begin{figure}[t]
\vspace{6pt}
\centering\includegraphics[width=\linewidth]{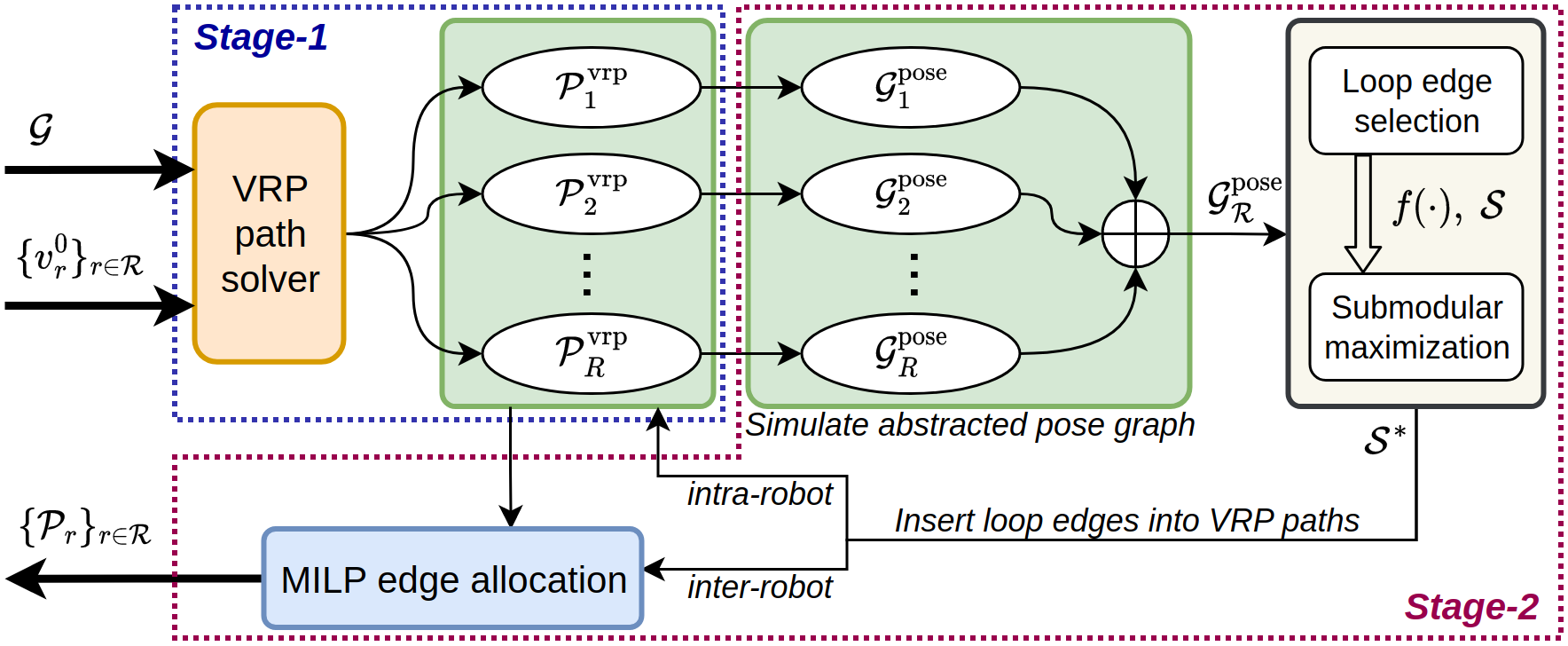}
\caption{The framework of the proposed method, which takes inputs of a graph representation of the environment $\G$ and robots' initial positions $\{v^{0}_{r}\}_{r\in \mathcal{R}}$, and finally outputs the robots' paths $\{\mathcal{P}_{r}\}_{r\in\mathcal{R}}$ that can cover the environment while resulting in a well-connected multi-robot pose graph.}
 \label{fig_framework}
 \vspace{-10pt}
\end{figure}

This section describes how to simulate an \emph{abstracted} multi-robot pose graph as the robots follow their VRP paths to explore the graph, which will be used to find potential loop closures to enhance weak connections in the pose graph.
Specifically, a robot $r$ covers a subgraph of $\G$ by following $\pvrp_{r}$, which can be treated as a hierarchical abstraction of its actual SLAM pose graph during exploration.
In this paper, we directly use such abstracted pose graph for SLAM uncertainty evaluation, motivated by the hierarchical pose graph optimization as in~\cite{grisetti_hierarchical_2010}. 
We have the following definition to construct the abstracted pose graph.

\begin{definition}[abstracted pose graph]
Given a path $\pvrp_{r}$ over a graph $\G = \langle \mathcal{V}, \mathcal{E}, \omega \rangle$, an abstracted pose graph corresponding to $\pvrp_{r}$ is defined as $\gpose_{r} = \langle \mathcal{X}_{r}, \mathcal{Z}_{r} \rangle$, where each pose $x_i\in \mathcal{X}_{r}$ corresponds uniquely to a vertex visited by $\pvrp_{r}$, and an edge $z_{ij} = \langle x_i, x_j \rangle\in \mathcal{Z}_{r}$ exists iff the corresponding vertices of the two poses $x_i$ and $x_j$ are visited consecutively in $\pvrp_{r}$.
\label{def_abstract_pose_graph}
\end{definition}

Moreover, we define $\mathcal{M}_{\mathcal{X}}: \mathcal{X} \xrightarrow{} \mathcal{V}$ as a mapping function from robot poses to vertices in $\G$, and $\mathcal{M}_{\mathcal{Z}}: \mathcal{Z} \xrightarrow{} \mathcal{E}$ as a mapping function from edges in $\gpose_{r}$ to those in $\G$.
We have $\mathcal{M}_{\mathcal{X}}(\mathcal{X}_{r})\subseteq \mathcal{V}$, $\mathcal{M}_{\mathcal{Z}}(\mathcal{Z}_{r}) \subseteq \mathcal{E}$; 
and $\cup_{r\in \mathcal{R}}\mathcal{M}_{\mathcal{X}}(\mathcal{X}_{r}) \equiv \mathcal{V}$ as the VRP paths guarantee that the graph $\mathcal{G}$ is fully explored.
With a slight notation abuse, we define the distance metric $\omega(z_{ij}) = \omega(\mathcal{M}_{\mathcal{Z}}(z_{ij}))$, for each edge $z_{ij}\in \mathcal{Z}_r$.

\begin{remark}
If the edge lengths in the original $\mathcal{G}$ vary significantly, we can add additional vertices along the long edges in $\mathcal{G}$ to make them balanced for better approximation of the abstracted pose graph.
Meanwhile, the connectivity information in $\mathcal{G}$ remains unchanged. 
\end{remark}

\begin{remark}
The orientation of poses in the abstracted pose graph is not properly defined in Def.~\ref{def_abstract_pose_graph}, because $\mathcal{V}\subseteq \mathbb{R}^{2} / \mathbb{R}^{3}$ for vertices in $\G$ and $\pvrp_{r}$, but $\mathcal{X}_{r}, \mathcal{Z}_{r}\subseteq \operatorname{SE}(2) / \operatorname{SE}(3)$.
However, we claim that the exact value of the orientation has no impact on the pose graph uncertainty evaluation, as shown in Eq.~(\ref{eq_relationship}).
\end{remark}

Additionally, we have the following assumption to capture the inter-robot loop closures formed by the VRP paths.

\begin{assumption}
    If two robots visit the same vertex, an inter-robot loop closure is assumed to be established between the two corresponding poses in their pose graphs.
\label{assump_inter_loop}
\end{assumption}

With assumption~\ref{assump_inter_loop}, an abstracted collaborative pose graph $\gpose_{\mathcal{R}} = \langle \mathcal{X}, \mathcal{Z} \rangle$ can be constructed by connecting all abstracted pose graphs $\{\gpose_{r}\}_{r\in\mathcal{R}}$ with inter-robot loop closures. 
We define the mapping function $\mathcal{M}_{\mathcal{R}}: \mathcal{X} \xrightarrow{} \mathcal{R}$ that maps a pose $x_i\in \mathcal{X}$ to its corresponding robot.
Note in the following text, we may omit the term "abstracted" when there is no ambiguity.

\vspace{-5pt}
\subsection{Formulation of Loop Edge Selection Problem}

With the abstracted collaborative pose graph $\gpose_{\mathcal{R}}$, we can then identify \emph{informative} and \emph{distance-efficient} loop-closing actions, called loop edges, over $\gpose_{\mathcal{R}}$ to reduce the pose estimation uncertainty in multi-robot SLAM.
We define candidate loop edges in $\gpose_{\mathcal{R}}$ as follows.

\begin{definition}[Loop edge]
A candidate loop edge connects two poses that are not directly connected in $\gpose_{\mathcal{R}}$. Given $\gpose_{\mathcal{R}} = \langle \mathcal{X}, \mathcal{Z} \rangle$, the set of all candidate loop edges is defined as $\mathcal{S} = \{\langle x_i, x_j \rangle \mid x_i, x_j \in \mathcal{X}; i < j; \langle x_i, x_j \rangle \notin \mathcal{Z}\}$, where $\langle x_i, x_j \rangle$ is also denoted as $z_{ij}$ for simplicity.
\label{def_loop_edge}
\end{definition}

The covariance matrix $\Sigma_{ij}$ attached to loop edge $z_{ij}$ is defined as a constant if no prior information about feature distribution is available, or can be set proportional to the number of features around $\mathcal{M}_{\mathcal{X}}(x_i)$ and $\mathcal{M}_{\mathcal{X}}(x_j)$ in the environment as in~\cite{chen_online_2019}.

\begin{remark}
    A loop edge $\langle x_i, x_j \rangle$ is not a loop closure, but a continuous action to establish loop closure between $x_i$ and $x_j$, \ie, the robot $\mathcal{M}_{\mathcal{R}}(x_j)$ will move from $\mathcal{M}_{\mathcal{X}}(x_j)$ towards another vertex $\mathcal{M}_{\mathcal{X}}(x_i)$ in $\G$ to establish loop closures.
    During this process, a chain of poses and edges may be added into $\gpose_{\mathcal{R}}$ depending on the distance between $\mathcal{M}_{\mathcal{X}}(x_i)$ and $\mathcal{M}_{\mathcal{X}}(x_j)$, rather than only one loop closure.
    Previous work~\cite{chen_broadcast_2020} has provided a bounded one-edge approximation of the chain structure based on Kirchhoff’s matrix-tree theorem~\cite{1086426} to facilitate efficient evaluation of the graph topology metric.
    However, it leads to repeated approximation when approximating multiple chain structures, \ie, multiple loop edges, and thus loses the approximation bounds.
To supplement that, here we further penalize long loop edges with their distance metrics to avoid distance-costly actions, while using one-edge approximation when evaluating their contribution to the SLAM uncertainty reduction.
\label{remark_loop_edge_approximation}
\end{remark}

After constructing a set $\mathcal{S}$ of candidate loop edges, we define the loop edge selection problem as follows:

\begin{problem}
    Given a multi-robot pose graph $\gpose_{\mathcal{R}}$ and a ground set $\mathcal{S}$ of candidate loop edges, find a subset $\mathcal{S}'\in 2^{\mathcal{S}}$ so that the following objective function is maximized:
\begin{equation}
\small
\begin{aligned}
    f(\mathcal{S}{'}) =&  \frac{1}{n} \log \det\left(\mathbf{L}(\gpose_{\mathcal{R}}) + \sum\nolimits_{z_{ij} \in \mathcal{S}{'}} \gamma_{ij} \mathbf{B}_{ij}\mathbf{B}_{ij}^{\top}\right) \\
    &- \alpha \cdot \sum\nolimits_{z_{ij}\in \mathcal{S}{'}} 2\cdot \omega(z_{ij}) + d^{\text{max}},
\end{aligned}
\label{eq_obj}
\end{equation}
where $f:2^{\mathcal{S}} \xrightarrow{} \mathbb{R}$ is a set function defined over the ground set $\mathcal{S}$; $\mathbf{L}(\gpose_{\mathcal{R}})$ is the reduced weighted Laplacian matrix corresponding to $\gpose_{\mathcal{R}}$, with starting poses of robots anchored; $\alpha$ is a parameter that balances the graph topology metric and the distance metric; $d^{\text{max}}$ is a constant that keeps $f(\cdot)$ being positive, and is defined as $d^{\text{max}} = 2\cdot \max \{ \omega(z_{ij}) \mid z_{ij}\in \mathcal{S} \} \cdot \vert \mathcal{S}\vert$.
\label{prob_edge_selection}
\end{problem}

In Problem~\ref{prob_edge_selection}, the first term $\frac{1}{n}\log \det (\mathbf{L}(\cdot))$ in $f(\cdot)$ quantifies the multi-robot SLAM uncertainty after adding a set $\mathcal{S}'$ of loop edges into $\gpose_{\mathcal{R}}$.
It is a pose graph topology metric that directly relates to the FIM in pose graph optimization, as introduced in Sec.~\ref{sec_FIM_laplacian}.
By selecting a loop edge $z_{ij}$, an edge connecting $x_i$ and $x_j$ is added to $\gpose_{\mathcal{R}}$.
Hence the Laplacian matrix is updated as $\mathbf{L}(\gpose_{\mathcal{R}}) + \gamma_{ij}\mathbf{B}_{ij}\mathbf{B}_{ij}^{\top}$, where $\gamma_{ij}$ is the encapsulated weight for the edge $z_{ij}$ with covariance $\Sigma_{ij}$.
Each selected loop edge $z_{ij}$ introduces extra distance cost of $2\cdot \omega(z_{ij})$, where the multiplier $2$ comes from an assumption that a robot will follow the path $\langle x_j, x_i, x_j \rangle$ to establish loop closures with $x_i$ and then go back to $x_j$ to continue exploration.

The objective function $f(\cdot)$ in Problem~\ref{prob_edge_selection} encourages the addition of loop edges into the robot's VRP paths, to improve the graph topology metric, thereby reduce the pose-SLAM uncertainty. Meanwhile, it also penalizes distance-costly loop edges to maintain quick coverage.
Problem~\ref{prob_edge_selection} is similar to the edge selection problem in~\cite{khosoussi_reliable_2019}, but further considers the distance cost, in which case a simple greedy algorithm as in~\cite{khosoussi_reliable_2019} has no optimality guarantee.
We will introduce the algorithms to solve Problem~\ref{prob_edge_selection} in Sec.~\ref{sec_methods}.

% \vspace{-4pt}
\subsection{Design of Parameter $\alpha$ in Problem~\ref{prob_edge_selection}}
\label{sec_alpha}

The parameter $\alpha$ in Eq.~(\ref{eq_obj}) relates to multiple factors, including the topology of the graph $\G$, the environmental area covered by $\G$, and the distribution of potential loop edges in $\gpose_{\mathcal{R}}$, which presents challenges to provide a closed-form definition for $\alpha$.
Instead, we employ a more practical numerical approach to derive $\alpha$ in this work.
Specifically, we define two bounds $\alpha^{\text{max}} =  \max_{z_{ij}\in \mathcal{S}}\frac{\Delta_{f}(z_{ij}\vert \emptyset)}{2\cdot \omega(z_{ij})}$, $\alpha^{\text{min}} =  \min_{z_{ij}\in \mathcal{S}}\frac{\Delta_{f}(z_{ij}\vert \emptyset)}{2\cdot \omega(z_{ij})}$, which corresponds to the two loop edges in $\mathcal{S}$ that have the most and the least contributions to the objective function $f(\cdot)$. 
A reasonable $\alpha$ should be within the interval $[\alpha^{\text{min}}, \alpha^{\text{max}}]$ because:
(1) if $\alpha > \alpha^{\text{max}}$, no loop edge in $\mathcal{S}$ can improve the objective value $f(\emptyset)$ because of the submodularity (proved in Sec.~\ref{sec_proof_submodularity});
(2) if $\alpha < \alpha^{\text{min}}$, all candidate loop edges in $\mathcal{S}$ will be taken into consideration, even those connecting two poses that are far from each other which, however, should be discarded.
We finally define $\alpha$ as:
\begin{equation}
\label{eq_def_alpha}
    \alpha = \alpha^{\text{min}} + \lambda (\alpha^{\text{max}} - \alpha^{\text{min}}),
\end{equation}
where $\lambda\in (0, 1)$ controls how many candidate loop edges in $\mathcal{S}$ is valid in Problem~\ref{prob_edge_selection}, \eg, a candidate loop edge $z_{ij}$ with $\frac{\Delta_{f}(z_{ij}\vert \emptyset)}{2\cdot \omega(z_{ij})} \le \alpha$ will be discarded; and influences the number of finally selected loop edges in Problem~\ref{prob_edge_selection}.

\subsection{MILP-based Loop Edge Allocation and Insertion}
\label{sec_allocation}

After solving Problem~\ref{prob_edge_selection}, the set of selected loop edges is denoted as $\mathcal{S}^{*}$, which will be allocated to robots and inserted into their VRP paths.
First, for each loop edge $z_{ij}\in \mathcal{S}^{*}$ that satisfies $\mathcal{M}_{\mathcal{R}}(x_i) = \mathcal{M}_{\mathcal{R}}(x_j)$, $\ie$, $z_{ij}$ aims to establish intra-robot loop closure of robot $\mathcal{M}_{\mathcal{R}}(x_i)$, a sequence $\langle x_j, x_i, x_j \rangle$ is directly inserted into the corresponding position in the robot's VRP path.
Second, for all $z_{ij}$ that satisfies $\mathcal{M}_{\mathcal{R}}(x_i) \ne \mathcal{M}_{\mathcal{R}}(x_j)$, 
the loop edge is allocated to either $\mathcal{M}_{\mathcal{R}}(x_i)$ or $\mathcal{M}_{\mathcal{R}}(x_j)$, depending on the results of balancing distance cost among robots.
Specifically, all such loop edges are allocated to the related robots by solving a mixed integer linear program (MILP) that minimizes the maximum distance of each involved robot, and then inserted into their allocated robots' VRP paths as in the first case. 
The details are omitted here due to space limitations.
The final paths for multi-robot graph exploration can then be obtained, denoted as $\{\mathcal{P}_{r}\}_{r\in\mathcal{R}}$.
The detailed formulation of MILP is presented in Alg.~\ref{alg_milp} in the Appendix section.

% bipartite graph 二分图

\vspace{-5pt}
\section{Methodology}
\label{sec_methods}

This section first proves the submodularity of the objective function in Problem~\ref{prob_edge_selection}, and then introduces approximation algorithms in submodular maximization to find sub-optimal solutions $\mathcal{S}^{*}$ to Problem~\ref{prob_edge_selection} with optimality guarantees.

\subsection{Submodularity of Objective Function in Problem~\ref{prob_edge_selection}}
\label{sec_proof_submodularity}

\begin{proposition}
    The set function $f(\cdot)$ in Problem~\ref{prob_edge_selection} is a non-monotone submodular function.
\label{prop_submodular}
\end{proposition}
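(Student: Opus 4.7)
The plan is to decompose $f$ into three parts and handle each separately: the topology term
$g(\mathcal{S}') := \tfrac{1}{n}\log\det\!\bigl(\mathbf{L}(\gpose_{\mathcal{R}}) + \sum_{z_{ij}\in\mathcal{S}'}\gamma_{ij}\mathbf{B}_{ij}\mathbf{B}_{ij}^{\top}\bigr)$, the distance penalty $d(\mathcal{S}') := -2\alpha\sum_{z_{ij}\in\mathcal{S}'}\omega(z_{ij})$, and the constant $d^{\max}$. Since modular functions (including constants) are trivially submodular, and submodularity is closed under addition, it suffices to prove that $g$ is submodular. Non-monotonicity will then follow from combining a monotone non-decreasing $g$ with a strictly decreasing $d$, with the balance tipped by the numerical design of $\alpha$ in Section~\ref{sec_alpha}.

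For submodularity of $g$, first observe that because each robot's starting pose is anchored and each individual pose graph contains a spanning path along its VRP route, $\mathbf{L}(\gpose_{\mathcal{R}})$ is positive definite; adding any rank-one PSD term $\gamma_{ij}\mathbf{B}_{ij}\mathbf{B}_{ij}^{\top}$ preserves this property. Let $\mathbf{M}(\mathcal{S}')$ denote the matrix inside $g$. For any $A\subseteq B\subseteq\mathcal{S}$ and any $z_{uv}\in\mathcal{S}\setminus B$, the matrix determinant lemma yields
\begin{equation*}
\Delta_{g}(z_{uv}\vert A) \;=\; \tfrac{1}{n}\log\!\bigl(1 + \gamma_{uv}\,\mathbf{B}_{uv}^{\top}\mathbf{M}(A)^{-1}\mathbf{B}_{uv}\bigr),
\end{equation*}
and the analogous expression at $B$. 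Since $A\subseteq B$ implies $\mathbf{M}(A)\preceq\mathbf{M}(B)$ in the L\"owner order and matrix inversion reverses that order, the quadratic form $\mathbf{B}_{uv}^{\top}\mathbf{M}(\cdot)^{-1}\mathbf{B}_{uv}$ is larger at $A$ than at $B$; monotonicity of $\log(1+\cdot)$ on $[0,\infty)$ then gives $\Delta_{g}(z_{uv}\vert A)\geq\Delta_{g}(z_{uv}\vert B)$, establishing submodularity of $g$. The same computation shows each marginal is non-negative, so $g$ is monotone non-decreasing.

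Combining $g$ with the modular $d$ and the constant $d^{\max}$ preserves submodularity, which proves the first part of the claim. For non-monotonicity, the construction of $\alpha$ in Section~\ref{sec_alpha} guarantees $\alpha\geq\alpha^{\min}$, so there exists at least one candidate $z_{uv}\in\mathcal{S}$ whose information-to-distance ratio lies strictly below $\alpha$; for that edge $\Delta_{f}(z_{uv}\vert\emptyset) < 0$ and hence $f(\{z_{uv}\}) < f(\emptyset)$, ruling out monotone increasingness. Symmetrically, $\alpha\leq\alpha^{\max}$ ensures another edge has a strictly positive marginal, ruling out monotone decreasingness; this confirms $f$ is non-monotone.

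The main obstacle is the matrix-determinant-lemma step together with the L\"owner-order monotonicity of inversion; both facts are standard, but must be applied uniformly over the rank-one updates $\gamma_{uv}\mathbf{B}_{uv}\mathbf{B}_{uv}^{\top}$ to keep the chain of inequalities transparent. The remaining ingredients—positive-definiteness of the reduced Laplacian across all subsets $\mathcal{S}'\subseteq\mathcal{S}$, and the trivial preservation of submodularity under modular and constant perturbations—are routine bookkeeping.
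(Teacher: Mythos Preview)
Your proof is correct and follows essentially the same route as the paper: both rely on the matrix determinant lemma to reduce the marginal gain to $\tfrac{1}{n}\log(1+\gamma_{uv}\mathbf{B}_{uv}^{\top}\mathbf{M}(\cdot)^{-1}\mathbf{B}_{uv})$ and then invoke the L\"owner-order reversal under inversion to compare the quadratic forms at $A\subseteq B$. Your decomposition into $g+d+d^{\max}$ is a clean repackaging of the same computation (the modular distance term cancels in the paper's direct comparison of $\Delta_f(\cdot\mid A)$ and $\Delta_f(\cdot\mid B)$), and your non-monotonicity argument via the $\alpha^{\min}<\alpha<\alpha^{\max}$ bounds is in fact more explicit than the paper's, which simply appeals to the competing signs of the two terms.
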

\begin{proof}
According to the objective function~(\ref{eq_obj}), adding a loop edge will increase the graph connectivity metric but will also introduce additional distance cost.
Therefore, the monotonicity of $f(\cdot)$ is not preserved. 
To prove submodularity, it is equivalent to prove $\Delta_{f}(z_{ij}\vert A) \ge \Delta_{f}(z_{ij}\vert B)$, $\forall z_{ij}\in \mathcal{S}\backslash B$ and $A\subseteq B \subseteq \mathcal{S}$.
We have
$\Delta_{f}(z_{ij}\vert A) = \frac{1}{n} \log \det(\mathbf{L}_{A} + \gamma_{ij} \mathbf{B}_{ij}\mathbf{B}_{ij}^{\top}) - \frac{1}{n} \log \det(\mathbf{L}_{A}) - \alpha \cdot \omega(z_{ij})$, 
where $\mathbf{L}_{A} = \mathbf{L}(\gpose_{\mathcal{R}}) + \sum_{\langle x_i, x_j \rangle \in A} \gamma_{ij} \mathbf{B}_{ij}\mathbf{B}_{ij}^{\top}$ for notation simplicity.
According to the matrix determinant lemma, it holds that:
\begin{equation*}
\small
\det(\mathbf{L}_{A} + \gamma_{ij} \mathbf{B}_{ij}\mathbf{B}_{ij}^{\top})
= \det(\mathbf{L}_{A}) \det(1 + \gamma_{ij} \mathbf{B}_{ij}^{\top}\mathbf{L}_{A}^{-1}\mathbf{B}_{ij}).
\end{equation*}
Thus we have:
\begin{equation*}
\small
    \Delta_{f}(z_{ij}\vert A) = \log \det(1 + \gamma_{ij} \mathbf{B}_{ij}^{\top}\mathbf{L}_{A}^{-1}\mathbf{B}_{ij}) - \alpha \cdot \omega(z_{ij}).
\end{equation*}
Similarly, 
$\Delta_{f}(z_{ij}\vert B) = \log \det(1 + \gamma_{ij} \mathbf{B}_{ij}^{\top}\mathbf{L}_{B}^{-1}\mathbf{B}_{ij}) - \alpha \cdot \omega(z_{ij})$.
To prove  $\Delta_{f}(z_{ij}\vert A) \ge \Delta_{f}(z_{ij}\vert B)$, it is sufficient to show that $\gamma_{ij} \mathbf{B}_{ij}^{\top}\mathbf{L}_{A}^{-1}\mathbf{B}_{ij} \ge \gamma_{ij} \mathbf{B}_{ij}^{\top}\mathbf{L}_{B}^{-1}\mathbf{B}_{ij}$.

Since the reduced Laplacian matrix $\Lp_{A}$ is positive definite, its inverse $\Lp_{A}^{-1}$ is also positive definite.
According to Lemma~$9$ of~\cite{khosoussi_reliable_2019}, 
for two positive definite matrix $\Lp_{A}$ and $\Lp_{B}$, 
$\Lp_{A} \succeq \Lp_{B}$ iff 
$\Lp_{B}^{-1} \succeq \Lp_{A}^{-1}$.
Since $A \subseteq B$, 
we have $\Lp_{B} \succeq \Lp_{A}$, and thus $\Lp_{A}^{-1} \succeq \Lp_{B}^{-1}$.
It can then be proved that $\gamma_{ij} \mathbf{B}_{ij}^{\top}\mathbf{L}_{A}^{-1}\mathbf{B}_{ij} \ge \gamma_{ij} \mathbf{B}_{ij}^{\top}\mathbf{L}_{B}^{-1}\mathbf{B}_{ij}$, and hence $\Delta_{f}(z_{ij}\vert A) \ge \Delta_{f}(z_{ij}\vert B)$. 
This concludes the proof of the submodularity of the function $f(\cdot)$.
\end{proof}

\subsection{Submodular Maximization with Ordering Heuristics}
\label{sec_two_algorithms}

With Prop.~\ref{prop_submodular}, the Problem~\ref{prob_edge_selection} is recognized as an unconstrained submodular maximization (USM) problem. 
Existing approximation algorithms in submodular optimization can be used to solve Problem~\ref{prob_edge_selection} with optimality guarantees.
In this paper, we apply two algorithms, \ie, \emph{doubleGreedy}~\cite{buchbinder_tight_2012} and \emph{deterministicUSM}~\cite{buchbinder_deterministic_2018} algorithms to solve the problem, both of which provide $\frac{1}{2}$ optimality guarantee\footnote{It has been proved that no approximation algorithm can provide better than $\frac{1}{2}$ approximation with polynomial times of oracle calls~\cite{feige_maximizing_2011}.}.
The two algorithms treat the objective function $f(\cdot)$ as an oracle function and query $f(\cdot)$ for objective values.
The details of the two algorithms are referred to~\cite{buchbinder_tight_2012} and~\cite{buchbinder_deterministic_2018}, respectively.

Furthermore, we introduce the ordering heuristics to the above two algorithms, as in Alg.~\ref{alg_double_greedy} and Alg.~\ref{alg_deterministic}.
The motivation comes from two observations. 
First, both doubleGreedy and deterministicUSM algorithms have no requirements on the ordering of elements in the ground set.
% \ie, changing the order of elements in the ground set preserves the optimality guarantees.
Second, we find that the greedy-based algorithm (introduced in Sec.~\ref{sec_simple_greedy}) usually provides better results compared with these approximation algorithms, although it has no optimality guarantees.
Therefore, we introduce ordering heuristics to facilitate the advantage of the greedy algorithm while preserving the optimality guarantees.
Specifically, for the doubleGreedy algorithm, the next loop edge is selected as the one that has the maximum contribution given existing $X_{i-1}$ (lines 3-4 of Alg.~\ref{alg_double_greedy}).
And for the deterministicUSM algorithm, the next loop edge is selected as the best loop edge given $X^{\text{max}}$, where $(X^{\text{max}}, Y^{\text{max}})$ is the pair of set that has highest probability in previous distribution $\mathcal{D}_{i-1}$ (lines 3-5 of Alg.~\ref{alg_deterministic}).

Note the ordering heuristics require $\mathcal{O}(|\mathcal{S}|)$ oracle queries in each iteration, resulting in total oracle queries of order $\mathcal{O}(\vert \mathcal{S}\vert^2)$. 
However, the number of oracle queries can be significantly reduced by the lazy-check strategy, as shown in Tab.~\ref{tab_time}.
Specifically, we can maintain a heap to store the unvisited elements in $U$, and only update the value of the top element of the heap to find the best candidate loop edge in each iteration of algorithms.

\begin{proposition}
Alg.~\ref{alg_double_greedy} and Alg.~\ref{alg_deterministic} provide $\frac{1}{2}$-optimality guarantee for Problem~\ref{prob_edge_selection}.
\label{prop_half_guarantee}
\end{proposition}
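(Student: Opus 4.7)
The plan is to reduce Alg.~\ref{alg_double_greedy} and Alg.~\ref{alg_deterministic} to the original doubleGreedy~\cite{buchbinder_tight_2012} and deterministicUSM~\cite{buchbinder_deterministic_2018}, respectively, and to argue that adding the ordering heuristic is equivalent to applying the original algorithms to the ground set $\mathcal{S}$ under a permutation that is determined at runtime. Prop.~\ref{prop_submodular} already establishes the submodularity of $f(\cdot)$, and the constant $d^{\text{max}}$ in Eq.~(\ref{eq_obj}) ensures $f(\cdot) \ge 0$; so the only remaining task is to verify that the $\frac{1}{2}$-approximation analyses of the two base algorithms are insensitive to the enumeration order of $\mathcal{S}$.

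First I would recall the structure of the analysis for doubleGreedy: it is an inductive argument which, at iteration $i$, bounds $f(\mathrm{OPT}_{i-1}) - f(\mathrm{OPT}_{i}) \le \frac{1}{2}[f(X_i) - f(X_{i-1})] + \frac{1}{2}[f(Y_i) - f(Y_{i-1})]$, where $\mathrm{OPT}_i$ is the hybrid of an optimal solution with the current state $(X_i, Y_i)$. The key observation is that this inequality depends only on the marginal gains of the single element processed at iteration $i$ with respect to the current state, and is entirely insensitive to which of the unprocessed elements of $\mathcal{S}$ happens to be chosen. The heuristic in lines 3--4 of Alg.~\ref{alg_double_greedy} therefore merely fixes a data-dependent enumeration of $\mathcal{S}$, and the telescoping argument of~\cite{buchbinder_tight_2012} yields $f(X_{|\mathcal{S}|}) \ge \frac{1}{2} f(\mathrm{OPT})$ without modification.

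For Alg.~\ref{alg_deterministic} I would give an analogous argument. deterministicUSM maintains a deterministically tracked distribution $\mathcal{D}_i$ over pairs of nested sets $(X, Y)$ and, at each iteration, updates $\mathcal{D}_{i-1} \to \mathcal{D}_i$ based on the marginal gains of a single element at every support point. The per-iteration potential-function analysis of~\cite{buchbinder_deterministic_2018} bounds the loss against the hybrid optimum by quantities expressed solely in terms of marginal gains of the chosen element across the supports of $\mathcal{D}_{i-1}$; this inequality is symmetric in the ground-set elements, so selecting the next element as the one maximizing the marginal gain with respect to the modal pair $(X^{\max}, Y^{\max})$ is once again just a runtime permutation of $\mathcal{S}$ and leaves the analysis intact.

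The main obstacle is convincing oneself that the data-dependent ordering does not covertly break an invariant of the base analyses. I expect this to reduce to two checkable facts: (i) the yet-unprocessed elements of $\mathcal{S}$ never appear in any of the intermediate sets $X_i, Y_i$ or in the supports of $\mathcal{D}_i$, so the analyses treat them purely as ``future'' items whose identity is irrelevant; and (ii) the accept/reject updates in both algorithms depend only on the current state and on the specific element being considered, not on the identity or marginal gains of any other unprocessed element. Once these two points are checked, the $\frac{1}{2}$-bound claimed in Prop.~\ref{prop_half_guarantee} follows directly from the corresponding theorems of~\cite{buchbinder_tight_2012} and~\cite{buchbinder_deterministic_2018}.
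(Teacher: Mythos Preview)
Your proposal is correct and follows essentially the same approach as the paper: the paper's own justification is the single sentence that the proof ``follows the proofs in~\cite{buchbinder_tight_2012} and~\cite{buchbinder_deterministic_2018}, and that adding ordering heuristics does not affect the proof.'' Your write-up is a faithful (and considerably more detailed) expansion of exactly that claim, correctly identifying that the per-iteration inequalities in both analyses depend only on the current state and the element being processed, so a data-dependent permutation of $\mathcal{S}$ leaves the telescoping arguments intact.
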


The proof of Prop.~\ref{prop_half_guarantee} follows the proofs in~\cite{buchbinder_tight_2012} and~\cite{buchbinder_deterministic_2018}, and that adding ordering heuristics does not affect the proof.

\begin{proposition}
    Alg.~\ref{alg_double_greedy} and Alg.~\ref{alg_deterministic} require $\mathcal{O}(\vert \mathcal{S} \vert ^2)$ oracle queries, and the double greedy algorithm without ordering heuristics requires $\mathcal{O}(\vert \mathcal{S} \vert)$ oracle queries.
\end{proposition}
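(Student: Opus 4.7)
The plan is to bound, for each algorithm, the number of outer iterations and then the oracle queries performed per iteration; the product gives the stated totals. In all three cases (plain double greedy, Alg.~\ref{alg_double_greedy}, and Alg.~\ref{alg_deterministic}) the outer loop performs exactly $|\mathcal{S}|$ iterations because each iteration removes one candidate loop edge from the unvisited set $U$; hence the distinction lies entirely in the per-iteration cost of querying the oracle $f(\cdot)$.

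First, I would handle the plain double greedy algorithm of~\cite{buchbinder_tight_2012}. Here the element considered in iteration $i$ is fixed in advance by the input ordering, and the decision to include or exclude it in $X_{i-1}$ versus $Y_{i-1}$ requires only the two marginal gains $\Delta_f(u_i \mid X_{i-1})$ and $-\Delta_f(u_i \mid Y_{i-1}\setminus\{u_i\})$, i.e., $O(1)$ oracle calls per iteration. Summed over the $|\mathcal{S}|$ iterations, this yields $O(|\mathcal{S}|)$ total oracle queries, establishing the second part of the claim.

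Next, for Alg.~\ref{alg_double_greedy} with the ordering heuristic, the inserted selection step in lines 3--4 chooses the unvisited element maximizing $\Delta_f(\cdot \mid X_{i-1})$, which requires evaluating $f$ on every element still in $U$. Since $|U|$ decreases from $|\mathcal{S}|$ down to $1$ over the course of the algorithm, the total number of oracle calls is $\sum_{k=1}^{|\mathcal{S}|} k = O(|\mathcal{S}|^2)$. The same reasoning applies to Alg.~\ref{alg_deterministic}: lines 3--5 select the best unvisited element given $X^{\text{max}}$, again $O(|U|)$ oracle calls per iteration, yielding the same $O(|\mathcal{S}|^2)$ total.

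The main subtlety I anticipate is in Alg.~\ref{alg_deterministic}, where one must verify that the distribution update over pairs $(X, Y)\in\mathcal{D}_i$ does not by itself introduce a number of oracle queries that dominates the bound. Inspecting the deterministic USM of~\cite{buchbinder_deterministic_2018} shows that $\mathcal{D}_i$ contains only a constant number of pairs and the update rule uses $O(1)$ marginal-gain evaluations per pair, so the distribution bookkeeping contributes at most $O(|\mathcal{S}|)$ queries overall and is absorbed into the $O(|\mathcal{S}|^2)$ bound from the ordering step. Once this is confirmed the proposition follows directly.
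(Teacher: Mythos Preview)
Your argument is correct and follows the same reasoning the paper gives (the paper does not supply a formal proof, only the one-line observation that the ordering heuristic costs $\mathcal{O}(|\mathcal{S}|)$ queries per iteration and hence $\mathcal{O}(|\mathcal{S}|^2)$ in total). One imprecision worth fixing concerns the support of $\mathcal{D}_i$ in Alg.~\ref{alg_deterministic}: it is not constant. Because each iteration uses an \emph{extreme point} of an LP with only two non-box inequality constraints, at most two of the $z(X,Y)$ are fractional, so $|\mathrm{supp}(\mathcal{D}_i)|\le |\mathrm{supp}(\mathcal{D}_{i-1})|+2$ and hence $|\mathrm{supp}(\mathcal{D}_i)|=\mathcal{O}(i)$. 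Computing all $a_i(X),b_i(Y)$ therefore costs $\mathcal{O}(i)$ queries at step $i$, summing to $\mathcal{O}(|\mathcal{S}|^2)$ rather than $\mathcal{O}(|\mathcal{S}|)$; this is still absorbed into the ordering-step bound, so your final conclusion is unaffected.
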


% \vspace{-5pt}
\subsection{Simple Greedy-based Algorithm}
\label{sec_simple_greedy}

Here we also propose a simple greedy algorithm, which, however, has no optimality guarantees.
Specifically, the greedy algorithm selects the loop edge that contributes most to the objective function from $\mathcal{S}$ in each iteration and terminates until no candidate loop edge can further improve the objective value.
The lazy-check strategy can also be used to improve its time efficiency, as compared in Tab.~\ref{tab_time}.

% \vspace{10pt}
% \SetAlgoSkip{medskip}
\begin{algorithm}[t]
\small
\SetKwInOut{Input}{Input}\SetKwInOut{Output}{Output}
\SetKwInOut{Return}{Return}
% \SetAlgoLined 控制算法控制语句是否以end结尾
\caption{doubleGreedyWithOrder($f(\cdot)$, $\mathcal{S}$)}
\label{alg_double_greedy}
Let $X_{0}\xleftarrow{} \emptyset, Y_{0} \xleftarrow{} \mathcal{S}$, $U \xleftarrow{} \mathcal{S}$.\\
\For{$i=1$ to $|\mathcal{S}|$ }{
    $u_i \xleftarrow{} \arg \max_{u\in U} \Delta_{f}(u\mid X_{i-1})$.\\
    $U \xleftarrow{} U \backslash \{u_i\}$.\\
    Let $a_i \xleftarrow{} \max\{f(X_{i-1} \cup \{u_i\}) - f(X_{i-1}), 0\}$.\\
    Let $b_i \xleftarrow{} \max\{f(Y_{i-1} \backslash \{u_i\}) - f(Y_{i-1}), 0\}$.\\
    $p\xleftarrow{}$ random variable from $[0, 1]$.\\
    \eIf{$p > a_{i}/(a_{i} + b_{i})$}{
        $X_i \xleftarrow{} X_{i-1} \cup \{u_{i}\}$, $Y_i \xleftarrow{} Y_{i-1}$.\\
    }{
        $X_i \xleftarrow{} X_{i-1}$, $Y_i \xleftarrow{} Y_{i-1} \backslash \{u_{i}\}$.\\
    }
}
\KwRet{$\mathcal{S}^{*} \xleftarrow{} X_n$.}\\
*If $a_i = b_i = 0$, we assume $a_i / (a_i + b_i) = 1$.
% \vspace{-10pt}
\end{algorithm}

% \SetAlgoSkip{myskip}
\begin{algorithm}[t]
\small
\SetKwInOut{Input}{Input}\SetKwInOut{Output}{Output}
\SetKwInOut{Return}{Return}
% \SetAlgoLined 控制算法控制语句是否以end结尾
\caption{deterministicUSMWithOrder($f(\cdot)$, $\mathcal{S}$)}
\label{alg_deterministic}
Initialize a distribution $\mathcal{D}_{0} = \{\left(1, (\emptyset, \mathcal{S})\right)\}$, $U \xleftarrow{} \mathcal{S}$.\\
% Denote the elements of $\mathcal{N}$ by $u_1$, $u_2$, $\cdots$, $u_n$, in an arbitrary order (recall that $n = \vert \mathcal{N}\vert$).\\
\For{$i = 1$ to $|\mathcal{S}|$}{
    $(X^{\text{max}}, Y^{\text{max}}) \xleftarrow{} \arg \max_{(X, Y)\in supp(\mathcal{D}_{i-1})} P[(X, Y)]$.\\
    $u_i \xleftarrow{} \arg \max_{u\in U} \Delta_{f}(u \mid X^{\text{max}})$.\\
    $U \xleftarrow{} U \backslash \{u_i\}$.\\
    $\forall (X, Y) \in supp(\mathcal{D}_{i-1})$, let $a_{i}(X) = f(X \cup \{u_i\}) - f(X)$, $b_{i}(Y) = f(Y \backslash \{u_i\}) - f(Y)$.\\
    Find an \textbf{extreme point} solution of the following linear program problem:
    \begin{equation*}
    \tiny
    \begin{aligned}
        \mathbb{E}_{\mathcal{D}_{i-1}}[z(X, Y)a_{i}(X) + \omega(X, Y)b_{i}(Y)] &\ge 2\cdot \mathbb{E}_{\mathcal{D}_{i} - 1}[z(X, Y)b_{i}(Y)]\\
        \mathbb{E}_{\mathcal{D}_{i-1}}[z(X, Y)a_{i}(X) + \omega(X, Y)b_{i}(Y)] &\ge 2\cdot \mathbb{E}_{\mathcal{D}_{i} - 1}[\omega(X, Y)a_{i}(Y)]\\
        z(X, Y) + \omega(X, Y) = 1, z(X, Y), \omega(X, Y) &\ge 0, \forall (X, Y) \in supp(\mathcal{D}_{i-1}) 
    \end{aligned}
    \end{equation*}\\
    $\forall (X, Y) \in supp(\mathcal{D}_{i-1})$, add following to a new distribution $\mathcal{D}_{i}$:
    \begin{equation*}
    \tiny
        \begin{aligned}
            &\left\{\left (z(X, Y) \cdot P[(X, Y)], (X + u_i, Y) \right) \mid z(X, Y) > 0\right\}\\
            \cup& \left\{\left (\omega(X, Y) \cdot P[(X, Y)], (X, Y-u_i) \right) \vert \omega(X, Y) > 0\right\}
        \end{aligned}
    \end{equation*}
    
}
\KwRet{$\mathcal{S}^{*} \xleftarrow{} \arg \max_{(X, Y)\in supp(\mathcal{D}_{n})}\{f(X)\}$.}\\
*$supp(\mathcal{D})$ represents the support of the distribution $\mathcal{D}$.
\end{algorithm}

% \vspace{-5pt}
\section{Experiments}
This section simulates various 2D graph environments of different sizes for multi-robot exploration.
We use OR-Tools~\cite{ortools} as the VRP solver and the MILP solver in Sec.~\ref{sec_allocation}.
The time limit for VRP pathfinding is set as $20$ seconds.
The linear program (LP) problem in Alg.~\ref{alg_deterministic} is solved by \emph{pulp}\footnote{https://github.com/coin-or/pulp}, with an objective function defined as minimizing $0.5\cdot \sum z(X, Y) + 0.6 \cdot \sum \omega(X, Y)$.
By default, we take $\lambda = 0.3$ in Eq.~(\ref{eq_def_alpha}) to get $\alpha$ in Problem~\ref{prob_edge_selection}.
We compare the performance of five algorithms, \ie, doubleGreedy (dGre), doubleGreedy+Ordering (dGre+order), deterministicUSM (dUSM), deterministicUSM+Ordering (dUSM+order), and simpleGreedy (sGre). 
All algorithms are implemented in Python $3$ and tested on a desktop with an i9-13900 CPU and 32 GB of RAM.
% and will be open-source online\footnote{https://github.com/bairuofei/CGE}.

\begin{table*}[!t]
\vspace{6pt}
\captionsetup{skip=-3pt} % 这将取消表格标题与表格内容之间的额外间距
\caption{Running Time Comparison}
\centering
\small
\begin{tabular}{@{}l|ll|lll|lll@{}}
\toprule
Env ($m^2$)           & sGre      & sGre(l-c) & dGre~\cite{buchbinder_tight_2012}      & dGre+order & dGre+order(l-c) & dUSM~\cite{buchbinder_deterministic_2018}     & dUSM+order & dUSM+order(l-c) \\ \midrule
$60\times 60$   & $0.050$ & $\mathbf{0.017}$ & $0.061$ & $0.170$     & $0.054$          & $1.389$  & $1.373$       & $1.243$            \\
$80\times 80$   & $0.218$ & $\mathbf{0.050}$ & $0.133$ & $0.692$     & $0.183$          & $6.164$  & $6.762$       & $6.023$            \\
$100\times 100$ & $0.922$ & $\mathbf{0.139}$ & $0.296$ & $2.428$     & $0.489$          & $18.467$ & $19.816$      & $16.923$           \\
$120\times 120$ & $4.389$ & $\mathbf{0.483}$ & $0.948$ & $10.313$    & $1.733$          & $77.212$ & $72.687$      & $63.868$           \\ \bottomrule
\end{tabular}

\label{tab_time}
\raggedright
~~~~Note: The unit for all data in the table is seconds; each item is the averaged result over fifty experiments; l-c stands for lazy-check.
 % \vspace{-10pt}
\end{table*}

\subsection{Random Generation of Graph Experiments}
The graph environments are randomly derived from grid-like structures of different sizes, $\ie$, $60m\times 60m$, $80m\times 80m$, $100m\times 100m$, $120m\times 120m$, with a grid step as ten meters.
An example graph covering a $100m\times 100m$ area is shown in Fig.~\ref{fig_multilayer}.
% A grid graph is first created in a 2D environment, where 
Each vertex in a grid graph is only connected to its adjacent vertices.
Then $10\%$ vertices (and their outgoing edges) and additional $3\%$ edges are randomly removed to create diverse topologies.
Finally, Gaussian noise of $\mathcal{N}(0, 2m)$ is added to the $xy$ coordinates of vertices in the graph.
The covariance matrix $\Sigma_{ij}$ for each edge in $\gpose_{\mathcal{R}}$ and the set $\mathcal{S}$ is set as $\operatorname{diag}\{0.1m, 0.1m, 0.001rad\}$.
Fifty random graphs are independently generated for each environment size.
By default, three robots starting from the same vertex are deployed in the graph exploration tasks.

\begin{figure}[!t]
% \vspace{-0.1cm}
\centering
% \vspace{-5pt}
\subfloat[]{\includegraphics[width=.46\linewidth]{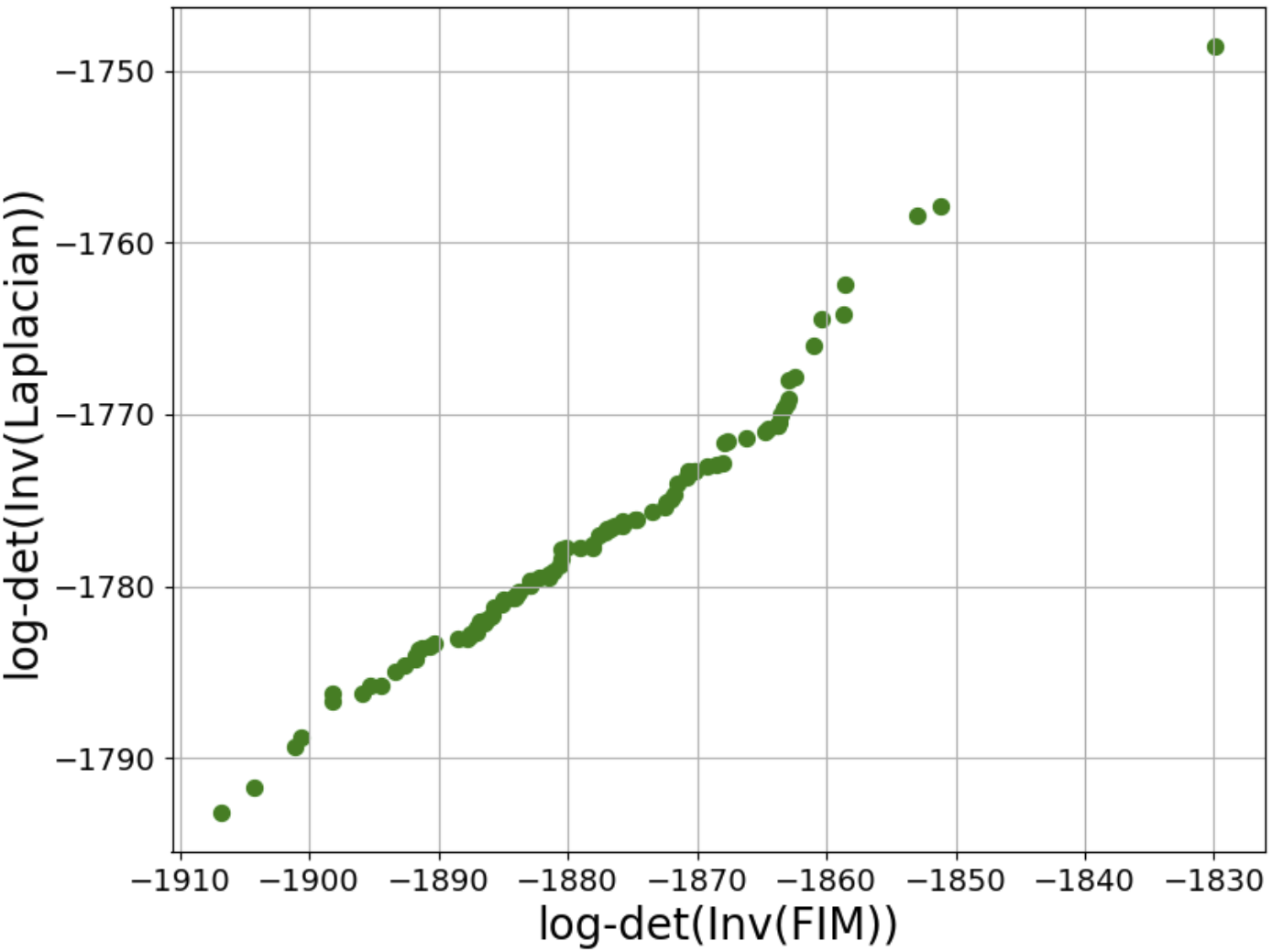}}\hspace{2pt}
\subfloat[]{\includegraphics[width=.46\linewidth]{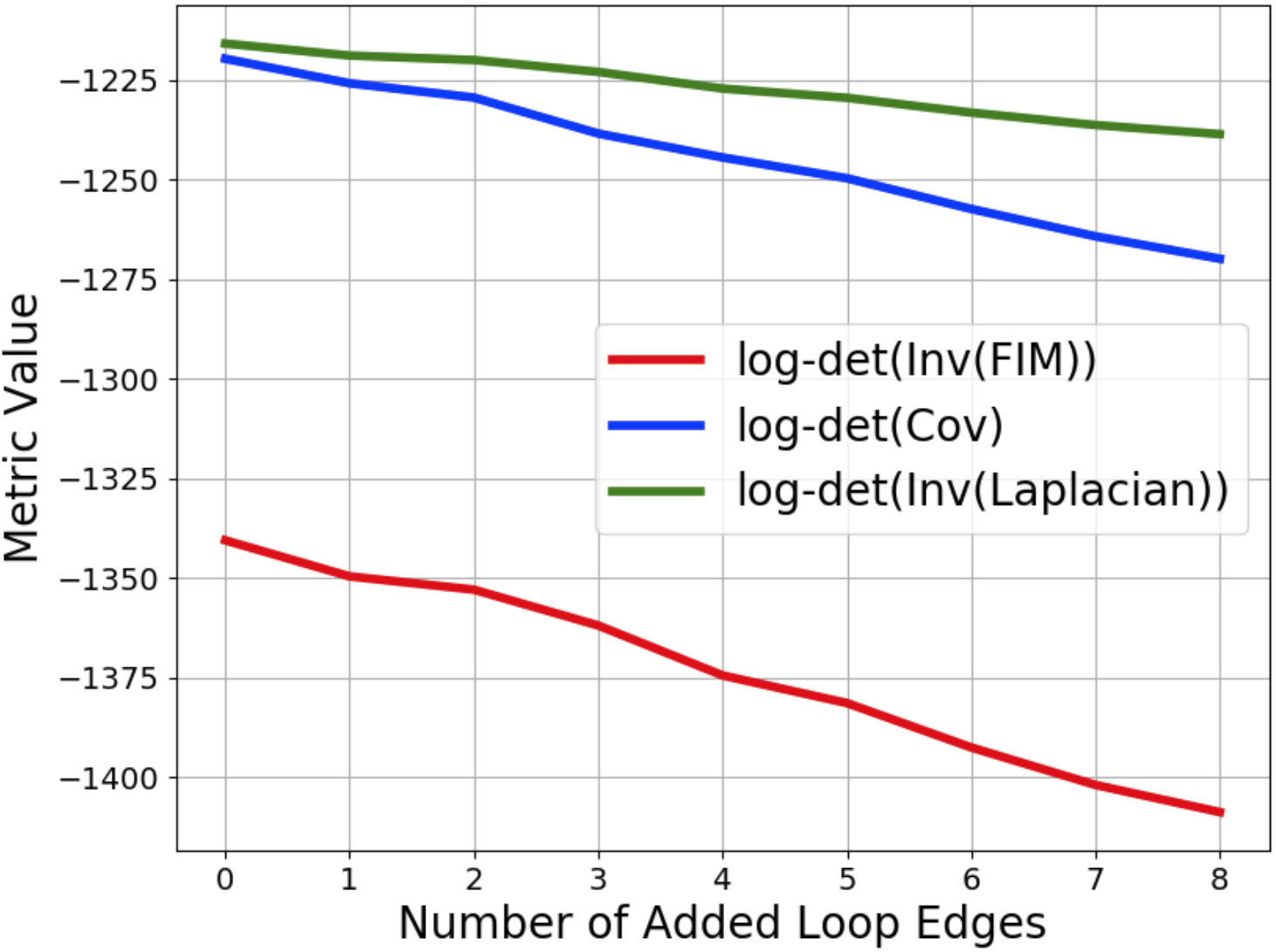}}
\vspace{-5pt}
\caption{(a) The relationship between $\log\det(\mathbf{L}_{\gamma}^{-1})$ and $\log\det(\mathbb{I}^{-1})$ evaluated on a set of pose graphs derived from a $120m\times 120m$ graph environment; (b) The pose estimation uncertainty decreases as more loop edges are added into the collaborative pose graph.}
\label{fig_approximation}
\vspace{-5pt}
\end{figure}

\vspace{-2pt}
\subsection{Relationship Verification of FIM and Graph Laplacian}
% Verification of relationship between FIM and graph Laplacian

We first verify the relation between the $\log$ determinant of full FIM in multi-robot pose graph optimization and the corresponding weighted pose graph Laplacian matrix, as shown in Fig.~\ref{fig_approximation}(a).
Each point in Fig.~\ref{fig_approximation}(a) corresponds to a multi-robot pose graph with random sets of loop edges added to it.
Similar to the results in~\cite{placed_general_2023},
the two metrics have a positive correlation with each other.
Moreover, the graph topology metric $\log \det (\mathbf{L})$ preserves action consistency with the original FIM metric when evaluating candidate loop edges.
Therefore, it is reasonable to use the graph topology metric in Problem~\ref{prob_edge_selection} for pose graph uncertainty evaluation, which brings lower computational complexity than the original FIM metric.
Fig.~\ref{fig_approximation}(b) shows the changing trend of the full covariance matrix (obtained from GTSAM~\cite{gtsam}), full FIM, and the reduced weighted Laplacian matrix as more loop edges are added into $\gpose_{\mathcal{R}}$.
Without accounting for the distance cost, the three metrics decrease monotonically, indicating that the pose uncertainty in SLAM reduces as the connectivity of the collaborative pose graph improves, and thereby the graph topology metric in Problem~\ref{prob_edge_selection} encourages adding loop edges into the robots' paths.

\begin{figure}[t]
% \vspace{-0.5cm}
\centering
% \vspace{-5pt}
\subfloat[$60m\times 60m$]{\includegraphics[width=.475\linewidth]{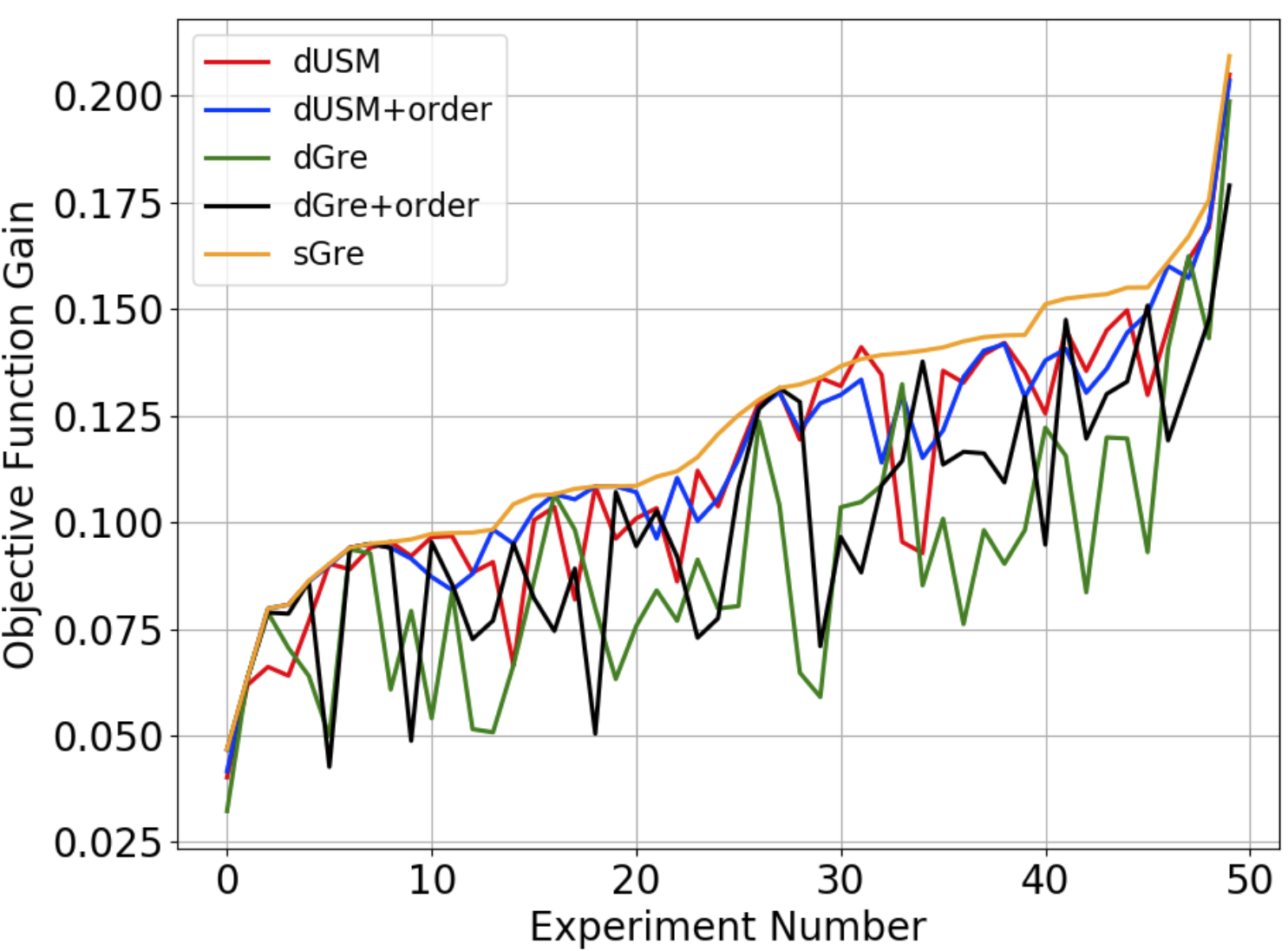}}\hspace{2pt}
\subfloat[$80m\times 80m$]{\includegraphics[width=.475\linewidth]{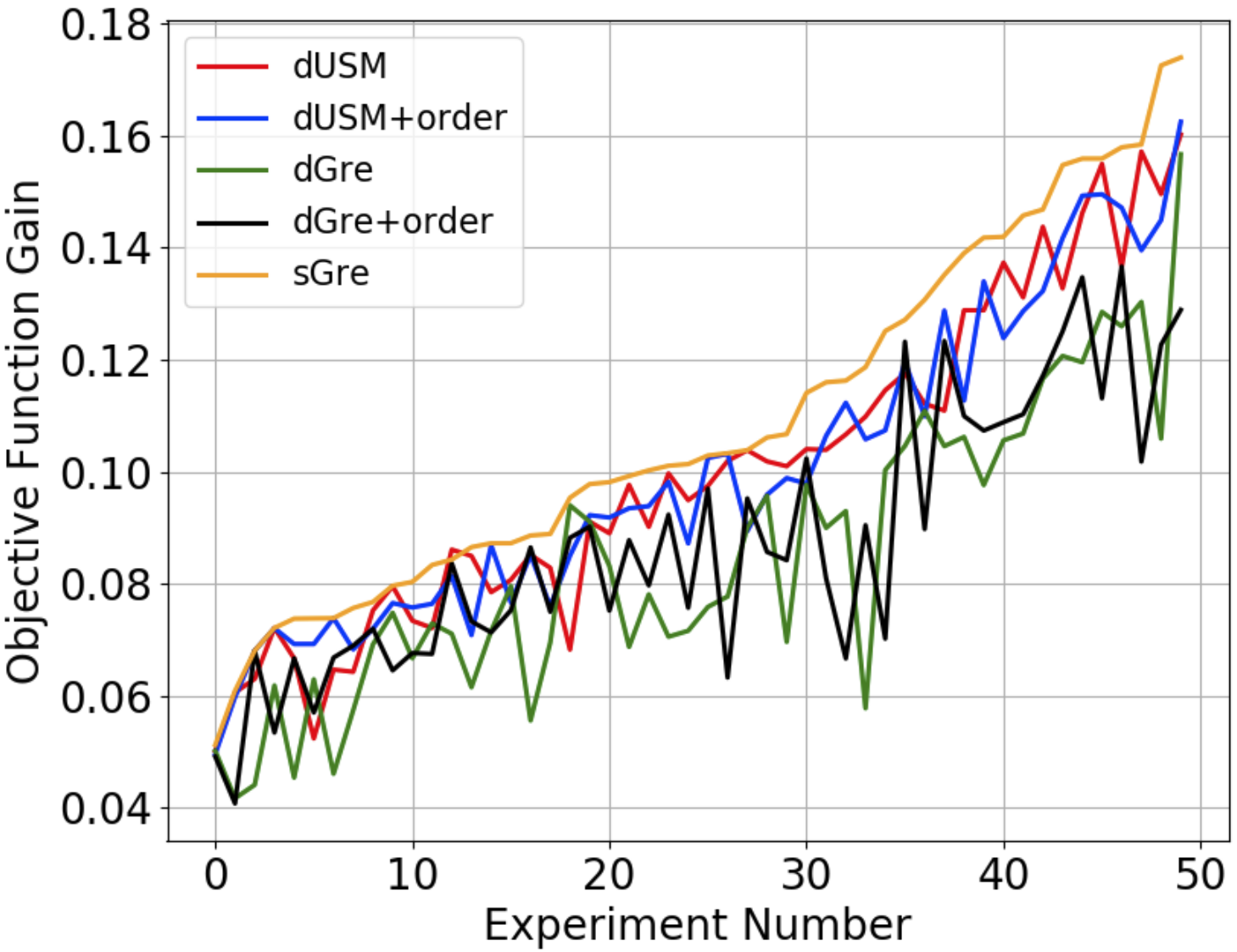}}\\
\subfloat[$100m\times 100m$]{\includegraphics[width=.475\linewidth]{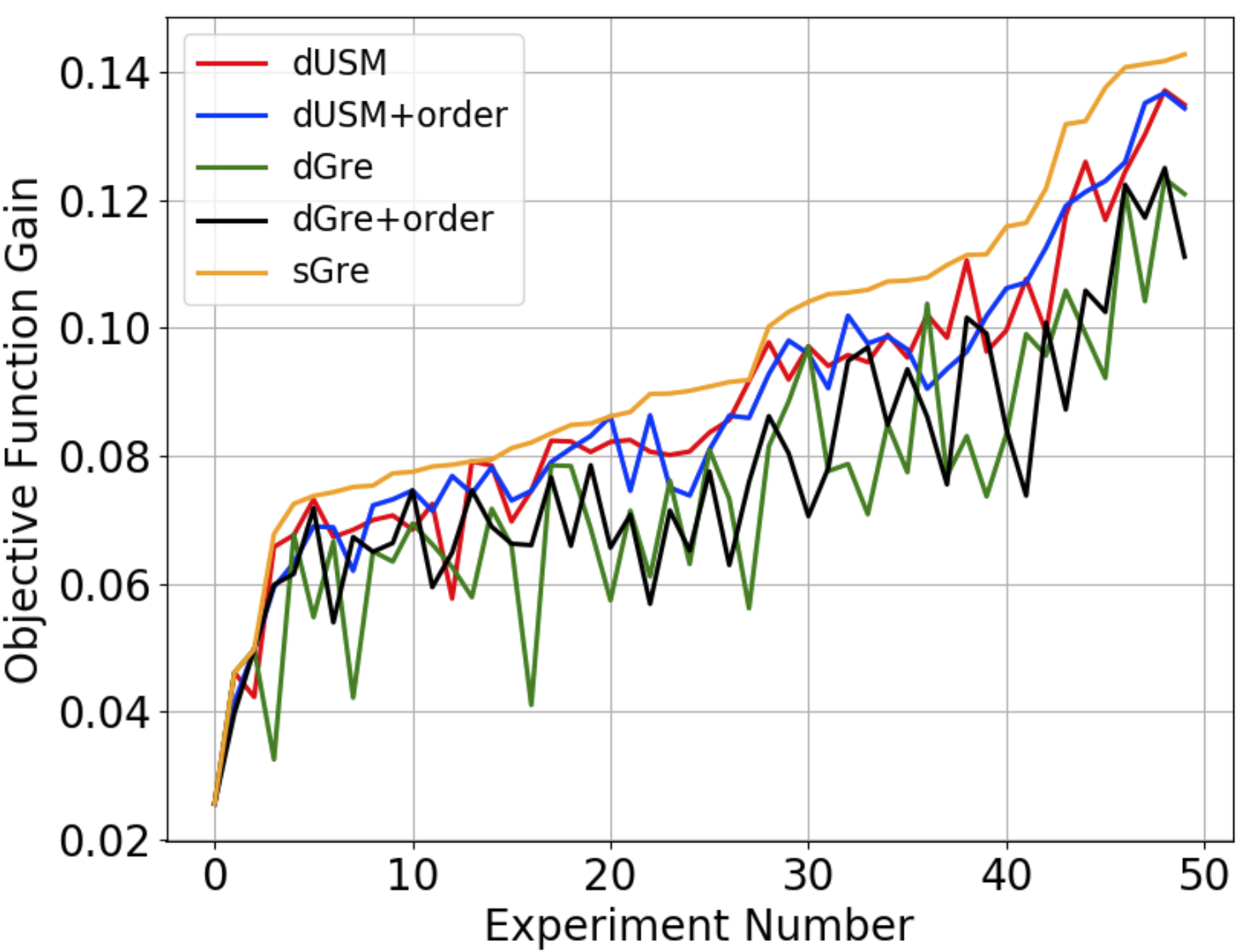}}\hspace{2pt}
\subfloat[$120m\times 120m$]{\includegraphics[width=.475\linewidth]{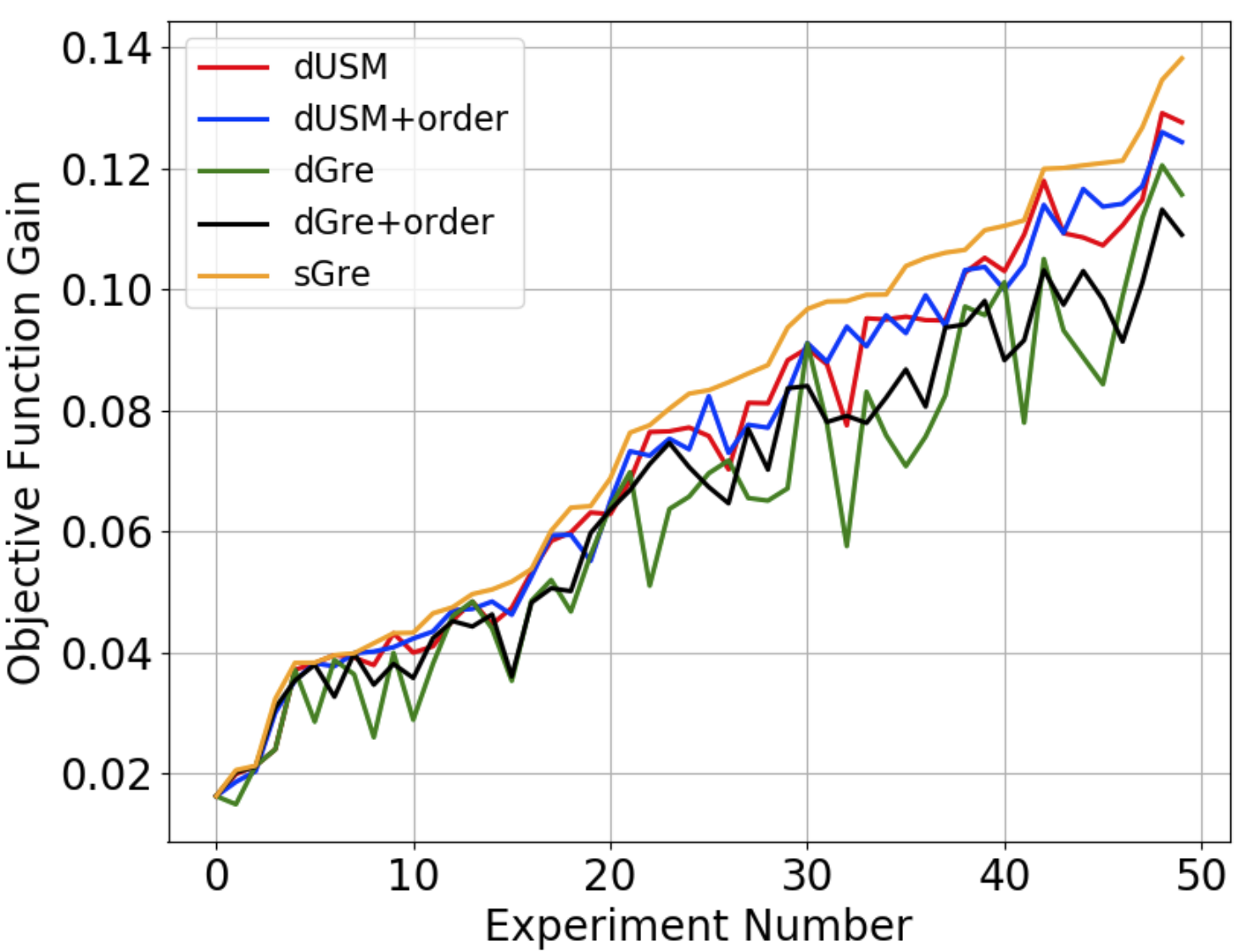}}
\caption{The objective gain of Problem~\ref{prob_edge_selection} with the five algorithms in $50$ independent experiments. 
% Results for $60m\times 60m$ and $80m\times 80m$ are not shown due to space constraints.
Note the results are sorted according to the objective value of the sGre algorithm for better visualization.}
\label{fig_2performance}
% \vspace{-10pt}
\end{figure}

\begin{figure}[t]
% \vspace{-2pt}
\centering
\subfloat[]{\includegraphics[width=.47\linewidth]{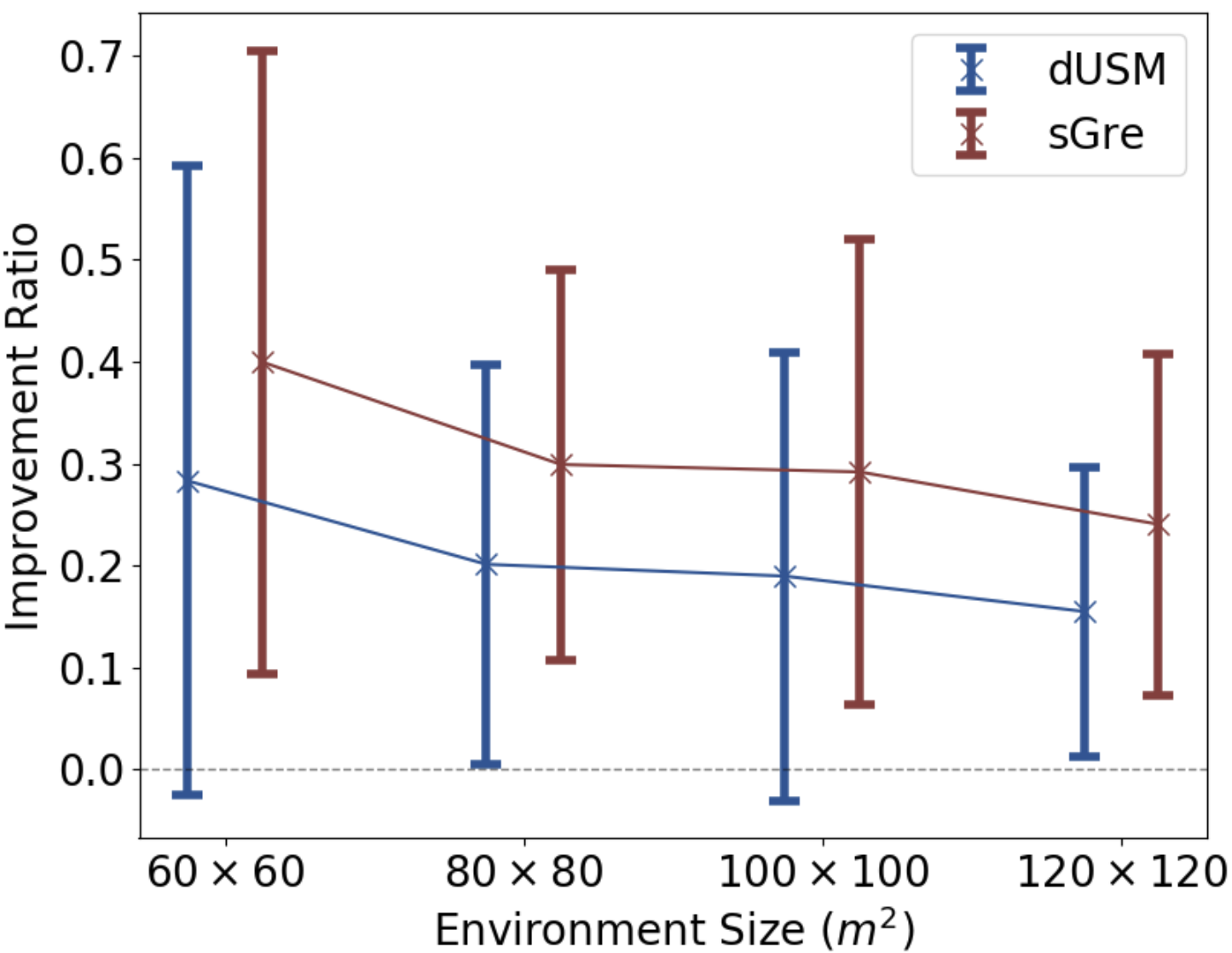}}\hspace{2pt}
\subfloat[]{\includegraphics[width=.485\linewidth]{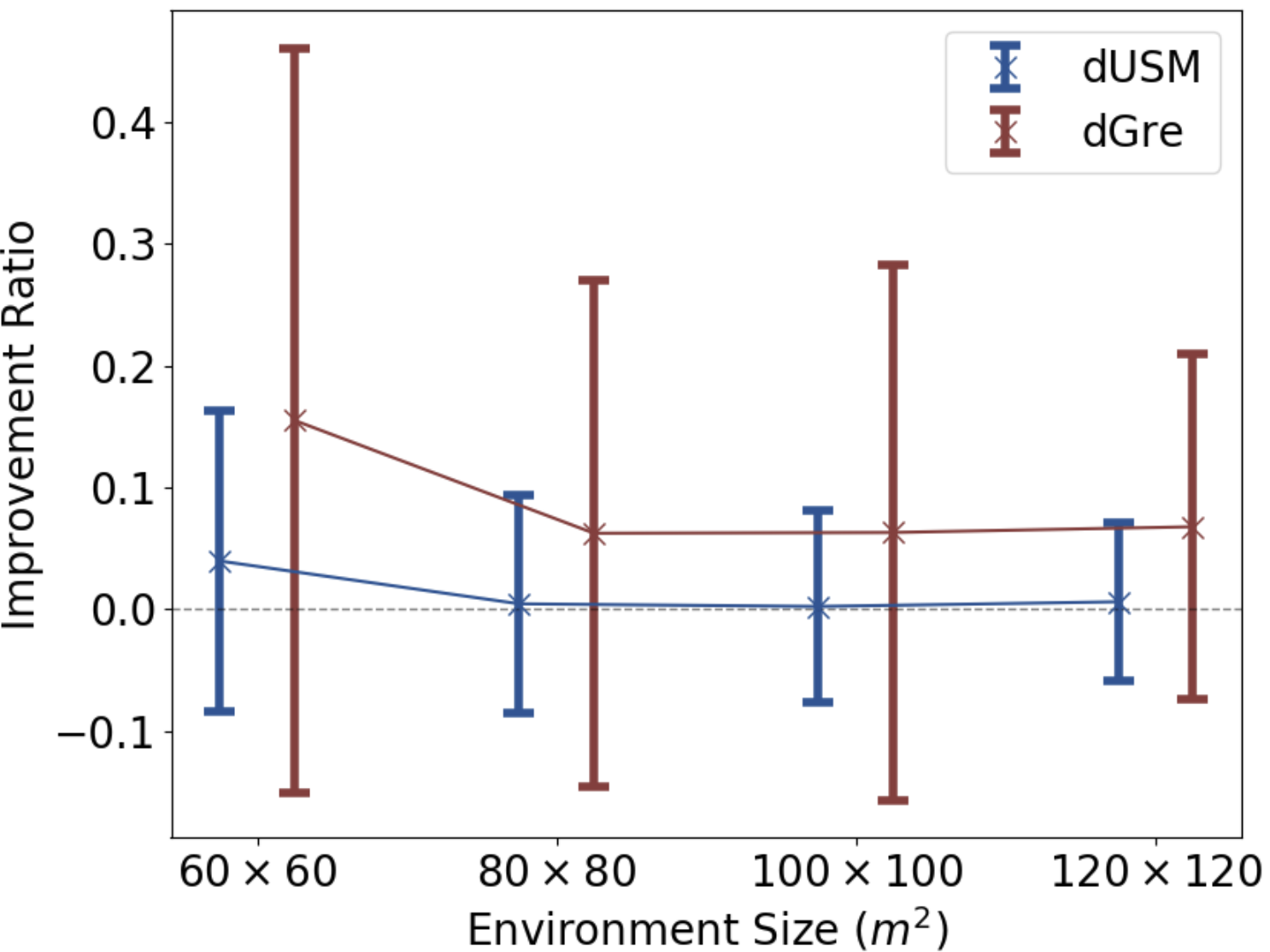}}
\caption{(a) The objective improvement ratio of the dUSM and sGre algorithms w.r.t. the dGre algorithm; (b) The objective improvement ratio after adding the ordering heuristics to dGre and dUSM algorithms.}
\label{fig_improve_ratio}
% \vspace{-10pt}
\end{figure}

\begin{figure}[!t]
% \vspace{-0.5cm}
\centering
% \vspace{-5pt}
\subfloat[$\lambda = 0.1$]{\includegraphics[width=.31\linewidth]{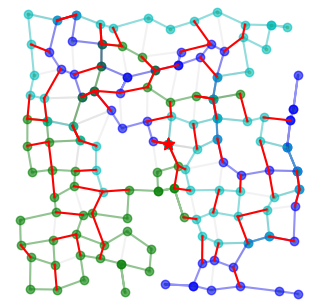}}\hspace{1pt}
\subfloat[$\lambda = 0.3$]{\includegraphics[width=.31\linewidth]{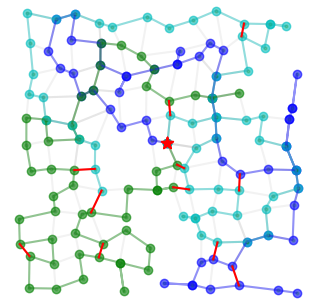}}\hspace{1pt}
\subfloat[]{\includegraphics[width=.31\linewidth]{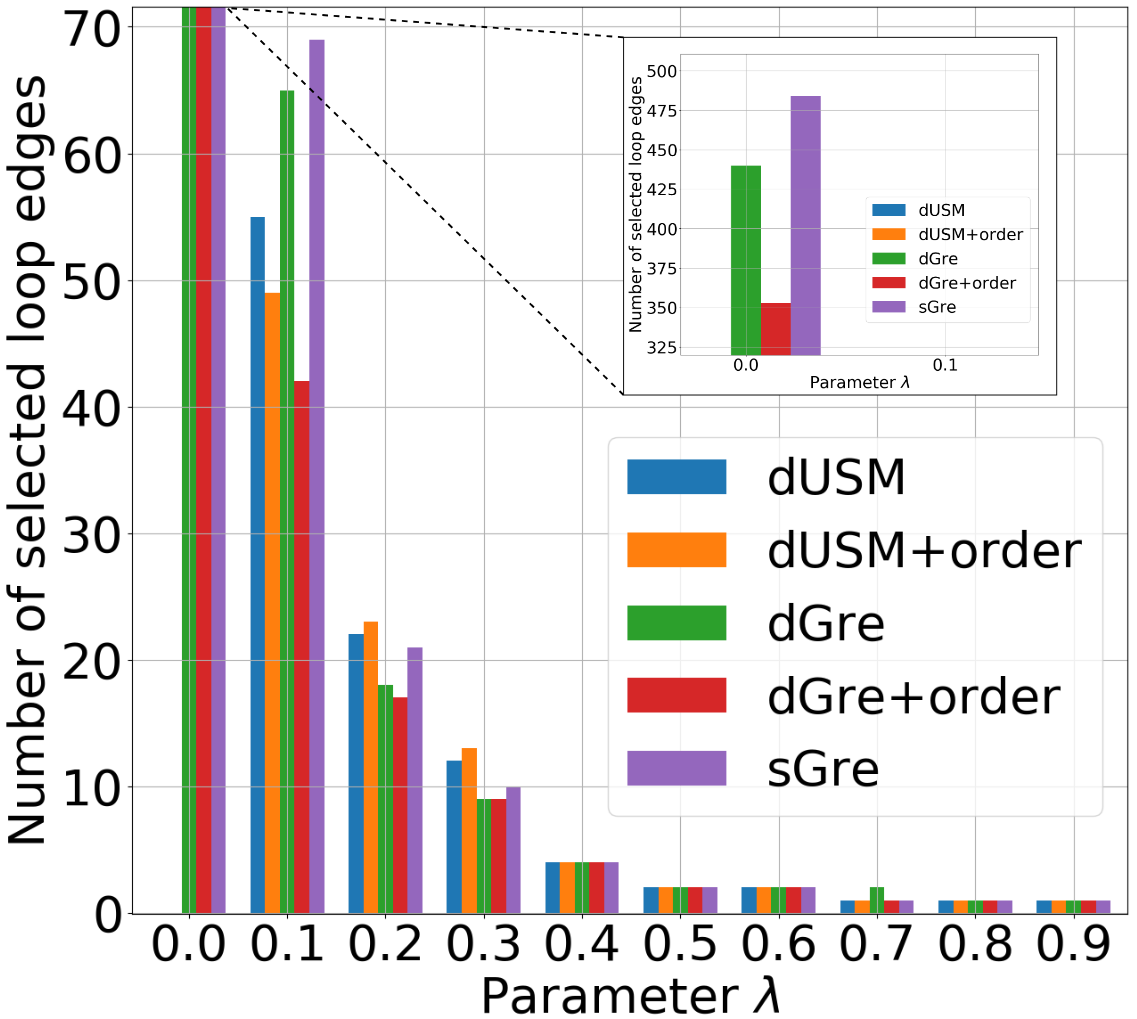}}
\caption{The loop edge selection problem defined over a $120m\times 120m$ graph environment with $11303$ candidate loop edges. Robots' paths are painted in different colors. (a) and (b) show the final selected loop edges (red) by the sGre algorithm with different $\lambda$ in Eq.~(\ref{eq_def_alpha}); (c) shows the comparison of the number of selected loop edges with different $\lambda$. Note the results of dUSM and dUSM+order are missing when $\lambda = 0$ because the two algorithms cannot output the results within an hour.}
\label{fig_lambda}
% \vspace{-15pt}
\end{figure}

\vspace{-2pt}
\subsection{Performance Comparison in Loop Edge Selection}

Fig.~\ref{fig_2performance} shows the performance of the five algorithms in loop edge selection problems over various environments.
The averaged running time of the algorithms is shown in Tab.~\ref{tab_time}.
Generally, the sGre algorithm provides the best results compared with others, although it has no performance guarantee.
For the four approximation algorithms that provide $\frac{1}{2}$ optimality guarantees, the dUSM algorithm generally gets better results than the dGre algorithm.
However, it spends significantly more time than the dGre algorithm as in Tab.~\ref{tab_time},
because it solves a linear program in each iteration, which dominates the computational time.
% Moreover, adding ordering heuristics introduces extra time complexity. 
% with an extra expense on running time as shown in Tab.~\ref{tab_time}.
The dGre algorithm usually gets the worst results, as shown in Fig.~\ref{fig_improve_ratio}(a), that the objective value of sGre and dUSM algorithms are $31\%$ and $21\%$ better than the dGre algorithm respectively.
However, it runs faster than the dUSM algorithm by an order of magnitude while still providing optimality guarantees.
On average, adding ordering heuristics to the dGre algorithm can improve its performance by $9\%$, while no discernible improvement is observed for the dUSM algorithm, as shown in Fig.~\ref{fig_improve_ratio}(b).
The ordering heuristics also introduces extra time complexity, as in Tab.~\ref{tab_time}. 
However, the extra time expense can be significantly reduced by applying the lazy-check strategy introduced in Sec.~\ref{sec_two_algorithms}, which also makes the sGre algorithm the fastest among all the compared algorithms.

% Please add the following required packages to your document preamble:
% \usepackage{booktabs}

Additionally, we also evaluate the effect of the parameter $\lambda$ in Eq.~(\ref{eq_def_alpha}) on the number of finally selected loop edges in Problem~\ref{prob_edge_selection}, as shown in Fig.~\ref{fig_lambda}.
As $\lambda$ increases from $0$ to $1$, the number of selected loop edges decreases significantly, which controls the frequency of active loop-closing actions in the final paths, and also balances the distance cost during exploration, as we analyzed in Sec.~\ref{sec_alpha}.

\section{Conclusion and Future Work}
This paper investigates a multi-robot graph exploration problem considering both exploration efficiency and pose graph reliability in multi-robot SLAM.
A two-stage strategy is proposed that first generates exploration paths for quick graph coverage, and then improves the paths by inserting informative and distance-efficient loop edges along the path.
The latter problem is formulated as a non-monotone submodular maximization problem, hence several approximation algorithms are applied with optimality guarantees. 
We also benchmark the performance of approximation algorithms and the greedy algorithm in the loop edge selection problem.
% \red{The experiments also show that a simple greedy algorithm can usually get better results with less computational time.}

The proposed method can be applied as a high-level path planner in real exploration tasks based on a graph representation of the environment.
Future work is to incorporate the proposed path planner into a multi-robot SLAM system for collaborative exploration and pose estimation.

% \addtolength{\textheight}{-12cm}   % This command serves to balance the column lengths
                                  % on the last page of the document manually. It shortens
                                  % the textheight of the last page by a suitable amount.
                                  % This command does not take effect until the next page
                                  % so it should come on the page before the last. Make
                                  % sure that you do not shorten the textheight too much.

%%%%%%%%%%%%%%%%%%%%%%%%%%%%%%%%%%%%%%%%%%%%%%%%%%%%%%%%%%%%%%%%%%%%%%%%%%%%%%%%

%%%%%%%%%%%%%%%%%%%%%%%%%%%%%%%%%%%%%%%%%%%%%%%%%%%%%%%%%%%%%%%%%%%%%%%%%%%%%%%%

%%%%%%%%%%%%%%%%%%%%%%%%%%%%%%%%%%%%%%%%%%%%%%%%%%%%%%%%%%%%%%%%%%%%%%%%%%%%%%%%

% Appendixes should appear before the acknowledgment.

\section*{Appendix}

\begin{algorithm}[t]
\small
\SetKwInOut{Input}{Input}\SetKwInOut{Output}{Output}
\SetKwInOut{Return}{Return}
% \SetAlgoLined 控制算法控制语句是否以end结尾
\caption{MILP-Based Loop Edge Allocation}
\label{alg_milp}
$\mathcal{R}^{\text{inter}} \xleftarrow{} \emptyset$, $\mathcal{S}^{\text{inter}} \xleftarrow{} \emptyset$\\
\For{each $z_{ij}$ in $\mathcal{S}^{*}$ }{
    \eIf{$\mathcal{M}_{\mathcal{R}}(x_{i}) = \mathcal{M}_{\mathcal{R}}(x_{j})$}{
        Insert $z_{ij}$ into $\pvrp_{\mathcal{M}_{\mathcal{R}}(x_{i})}$.\\
    }{
    $\mathcal{R}^{\text{inter}} \xleftarrow{}\mathcal{R}^{\text{inter}} + \mathcal{M}_{\mathcal{R}}(x_{i})$.\\ 
    $\mathcal{R}^{\text{inter}} \xleftarrow{}\mathcal{R}^{\text{inter}} + \mathcal{M}_{\mathcal{R}}(x_{j})$.\\
    $\mathcal{S}^{\text{inter}} \xleftarrow{} \mathcal{S}^{\text{inter}} + z_{ij}.$\\
    }
}
\For{$r$ in $\mathcal{R}^{\text{inter}}$ }{
    $\omega(\mathcal{P}_{r})\xleftarrow{} \omega(\pvrp_{r})$.\\
}

\For{each $z_{ij}$ in $\mathcal{S}^{\text{inter}}$ }{
    Define a Boolean variable $x_{ij}$.\\    $r_{1}\xleftarrow{}\mathcal{M}_{\mathcal{R}}(x_{i})$, $r_{2}\xleftarrow{}\mathcal{M}_{\mathcal{R}}(x_{j})$.\\
    $\omega(\mathcal{P}_{r_{1}})\xleftarrow{} \omega(\mathcal{P}_{r_{1}}) + x_{ij} \cdot \omega(z_{ij}).$\\
    $\omega(\mathcal{P}_{r_{2}})\xleftarrow{} \omega(\mathcal{P}_{r_{2}}) + (1 - x_{ij}) \cdot \omega(z_{ij}).$\\
}
Define a variable $V_{\text{max}}$.\\
Solve following MILP problem:
\begin{equation*}
    \min V_{\text{max}},
\end{equation*}
s.t. $\omega(\mathcal{P}_{r}) \le V_{\text{max}}$, $\forall r\in \mathcal{R}$.\\
Allocate loop edges in $\mathcal{S}^{\text{inter}}$ to robots according to the solution $\{x_{ij}\}_{z_{ij}\in \mathcal{S}^{\text{inter}}}$, and insert to their VRP paths.\\
\KwRet{$\{\mathcal{P}_{r}\}_{r\in\mathcal{R}}$.}\\
\end{algorithm}

% \section*{ACKNOWLEDGMENT}

% The preferred spelling of the word ÒacknowledgmentÓ in America is without an ÒeÓ after the ÒgÓ. Avoid the stilted expression, ÒOne of us (R. B. G.) thanks . . .Ó  Instead, try ÒR. B. G. thanksÓ. Put sponsor acknowledgments in the unnumbered footnote on the first page.

%%%%%%%%%%%%%%%%%%%%%%%%%%%%%%%%%%%%%%%%%%%%%%%%%%%%%%%%%%%%%%%%%%%%%%%%%%%%%%%%

\bibliographystyle{ieeetr} %ieeetr国际电气电子工程师协会期刊
\bibliography{main} % ref就是之前建立的ref.bib文件的前缀

\end{document}